\newtheorem{myDef}{Definition}
\newtheorem{myTheo}{Theorem}
\newtheorem{appendixTheo}{Theorem}
\title{Frequency-domain MLPs are More Effective \\ Learners in Time Series Forecasting}
\author{
Kun Yi$^1$, Qi Zhang$^2$, Wei Fan$^3$, Shoujin Wang$^4$, Pengyang Wang$^5$, Hui He$^1$ \\
\textbf{Defu Lian$^6$, Ning An$^7$, Longbing Cao$^8$, Zhendong Niu$^1$\thanks{Corresponding author}} \and
$^1$Beijing Institute of Technology,
$^2$Tongji University, $^3$University of Oxford\\
$^4$University of Technology Sydney, $^5$University of Macau, 
$^6$USTC\\ 
$^7$HeFei University of Technology,
$^8$Macquarie University\\
\{yikun, hehui617, zniu\}@bit.edu.cn, zhangqi\_cs@tongji.edu.cn, 
weifan.oxford@gmail.com\\
pywang@um.edu.mo,
liandefu@ustc.edu.cn,
ning.g.an@acm.org, 
longbing.cao@mq.edu.au
}
\begin{document}

\maketitle

\begin{abstract}
Time series forecasting has played the key role in different industrial, including finance, traffic, energy, and healthcare domains. While existing literatures have designed many sophisticated architectures based on RNNs, GNNs, or Transformers, another kind of approaches based on multi-layer perceptrons (MLPs) are proposed with simple structure, low complexity, and {superior performance}. However, most MLP-based forecasting methods suffer from the \textit{point-wise mappings} and \textit{information bottleneck}, which largely hinders the forecasting performance. To overcome this problem, we explore a novel direction of \textit{applying MLPs in the frequency domain} for time series forecasting. We investigate the learned patterns of frequency-domain MLPs and discover their two inherent characteristic benefiting forecasting, (i) \textit{global view}: frequency spectrum makes MLPs own a complete view for signals and learn global dependencies more easily, and (ii) \textit{energy compaction}: frequency-domain MLPs concentrate on smaller key part of frequency components with compact signal energy. Then, we propose FreTS, a simple yet effective architecture built upon \textit{Fre}quency-domain MLPs for \textit{T}ime \textit{S}eries forecasting. FreTS mainly involves two stages, (i) Domain Conversion, that transforms time-domain signals into \textit{complex numbers} of frequency domain; (ii) Frequency Learning, that performs our redesigned MLPs for the learning of real and imaginary part of frequency components. The above stages operated on both inter-series and intra-series scales further contribute to channel-wise and time-wise dependency learning. Extensive experiments on 13 real-world benchmarks (including 7 benchmarks for short-term forecasting and 6 benchmarks for long-term forecasting) demonstrate our consistent superiority over state-of-the-art methods. Code is available at this repository: \url{https://github.com/aikunyi/FreTS}.
\end{abstract}

\section{Introduction}\label{sec:intro}
\vspace{-2mm}
Time series forecasting has been a critical role in a variety of real-world industries, such as climate condition estimation~\cite{lorenz1956empirical,Zheng2015,WWWZF23}, traffic state prediction~\cite{HeZBYN22, tampsgcnets2022,He_2023}, economic analysis~\cite{king1966market,ariyo2014stock}, etc. 
In the early stage, many traditional statistical forecasting methods have been proposed, such as exponential smoothing \cite{holt1957forecasting} and auto-regressive moving averages (ARMA) \cite{whittle1963prediction}. Recently, the emerging development of deep learning has fostered many deep forecasting models, including Recurrent Neural Network-based methods (e.g., DeepAR~\cite{salinas2020deepar}, LSTNet~\cite{Lai2018}), Convolution Neural Network-based methods (e.g., TCN~\cite{bai2018}, SCINet~\cite{liu2022scinet}), Transformer-based methods (e.g., Informer~\cite{Zhou2021}, Autoformer~\cite{autoformer21}), and
Graph Neural Network-based methods (e.g., MTGNN~\cite{wu2020connecting}, StemGNN~\cite{Cao2020}, AGCRN~\cite{Bai2020nips}), etc.

\begin{figure}
\centering
\subfigure[\textbf{Left}: time domain. \textbf{Right}: frequency domain.]{\includegraphics[width=0.47\linewidth]{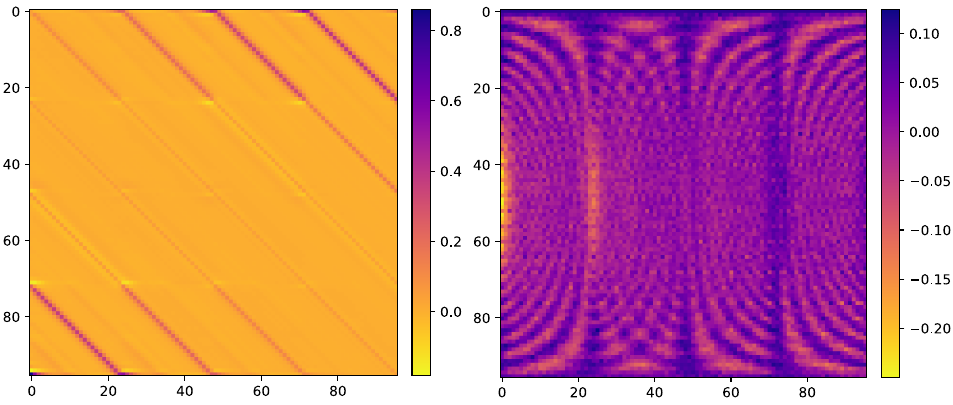} \label{fig:1a} } 
\hspace{-1mm}
\subfigure[\textbf{Left}: time domain. \textbf{Right}: frequency domain.]{ \includegraphics[width=0.477\linewidth]{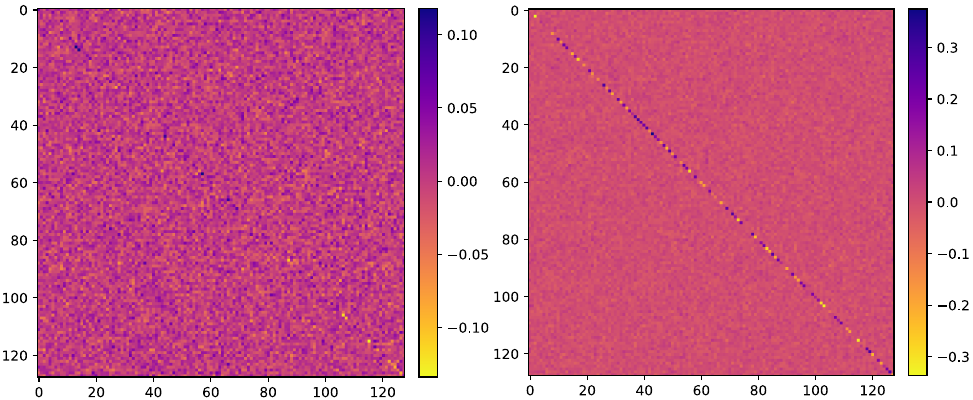} \label{fig:1b} }
\vspace{-3mm}
\caption{Visualizations of the learned patterns of MLPs in the time domain and the frequency domain (see Appendix \ref{visualization_settings}). (a) \textit{global view}: the patterns learned in the frequency domain exhibits more obvious global periodic patterns than the time domain; (b) \textit{energy compaction}: learning in the frequency domain can identify clearer diagonal dependencies and key patterns than the time domain.}
\label{fig:motivation}
\vspace{-3.5mm}
\end{figure}

While these deep models have achieved promising forecasting performance in certain scenarios, their {sophisticated} network architectures would usually bring up expensive computation burden in training or inference stage. Besides, the robustness of these models could be easily influenced with a large amount of parameters, especially when the available training data is limited \cite{Zhou2021,reformer20}.
Therefore, the methods based on multi-layer perceptrons (MLPs) have been recently introduced with simple structure, low complexity, and superior forecasting performance, such as N-BEATS~\cite{oreshkin2019n},  LightTS~\cite{zhang2022less}, DLinear~\cite{zeng2022transformers}, etc. However, these MLP-based methods rely on \textit{point-wise mappings} to capture temporal mappings, which cannot handle \textit{global dependencies} of time series. Moreover, they would suffer from the \textit{information bottleneck} with regard to the volatile and redundant \textit{local momenta} of time series, which largely hinders their performance for time series forecasting.

To overcome the above problems, we explore a novel direction of \textit{applying MLPs in the frequency domain} for time series forecasting. We investigate the learned patterns of frequency-domain MLPs in forecasting and have discovered their two key advantages:
(i) \textit{global view}: operating on spectral components acquired from series transformation, frequency-domain MLPs can capture a more complete \textit{view} of signals, {making it easier to learn \textit{global} spatial/temporal dependencies}.
(ii) \textit{energy compaction}: frequency-domain MLPs \textit{concentrate} on the smaller key part of frequency components with the {compact signal energy}, and thus can facilitate preserving clearer patterns while filtering out influence of noises.
Experimentally, we have observed that frequency-domain MLPs capture much more obvious global periodic patterns than the time-domain MLPs from Figure \ref{fig:1a}, which highlights their ability to recognize global signals. 
Also, from Figure \ref{fig:1b}, we easily note a much more clear diagonal dependency in the learned weights of frequency-domain MLPs, compared with the more scattered dependency learned by time-domain MLPs. 
This illustrates the great potential of frequency-domain MLPs to identify most important features and key patterns while handling complicated and noisy information.

To fully utilize these advantages, we propose \text{FreTS}, a simple yet effective architecture of \textit{Fre}quency-domain MLPs for \textit{T}ime \textit{S}eries forecasting. The core idea of FreTS is to learn the time series forecasting mappings in the \textit{frequency domain}.
Specifically, FreTS mainly involves two stages:
(i) Domain Conversion: the original time-domain series signals are first transformed into frequency-domain spectrum on top of Discrete Fourier Transform (DFT)~\cite{sundararajan2001discrete}, where the spectrum is composed of several \textit{complex numbers} as frequency components, including the \textit{real coefficients} and the \textit{imaginary coefficients}.
(ii) Frequency Learning: given the real/imaginary coefficients, 
we redesign the frequency-domain MLPs originally for the complex numbers by separately considering the real mappings and imaginary mappings. The respective real/imaginary parts of output learned by two distinct MLPs are then stacked in order to recover from frequency components to the final forecasting.
Also, FreTS performs above two stages on both inter-series and intra-series scales, which further contributes to the channel-wise and time-wise dependencies in the frequency domain for better forecasting performance.
We conduct extensive experiments on 13 benchmarks under different settings, covering 7 benchmarks for short-term forecasting and 6 benchmarks for long-term forecasting, which demonstrate our consistent superiority compared with state-of-the-art methods.

\section{Related Work}
\paragraph{Forecasting in the Time Domain}  Traditionally, statistical methods have been proposed for forecasting in the time domain, including (ARMA) \cite{whittle1963prediction}, VAR~\cite{Vector1993}, and ARIMA~\cite{Asteriou2011}. Recently, deep learning based methods have been widely used in time series forecasting due to their capability of extracting nonlinear and complex correlations \cite{Lim_2021,Torres2020}. These methods have learned the dependencies in the time domain with RNNs (e.g., deepAR~\cite{salinas2020deepar}, LSTNet \cite{Lai2018}) and CNNs (e.g., TCN~\cite{bai2018}, SCINet~\cite{liu2022scinet}). In addition, GNN-based models have been proposed with good forecasting performance because of their good abilities to model series-wise dependencies among variables in the time domain, such as TAMP-S2GCNets \cite{tampsgcnets2022}, AGCRN \cite{Bai2020nips}, MTGNN~\cite{wu2020connecting}, and GraphWaveNet~\cite{zonghanwu2019}. 
Besides, Transformer-based forecasting methods have been introduced due to their attention mechanisms for long-range dependency modeling ability in the time domain, such as Reformer \cite{reformer20} and Informer \cite{Zhou2021}.

\paragraph{Forecasting in the Frequency Domain} Several recent time series forecasting methods have extracted knowledge of the frequency domain for forecasting~\cite{kunyi_2023}. Specifically, 
SFM \cite{ZhangAQ17} decomposes the hidden state of LSTM into frequencies by Discrete Fourier Transform (DFT). StemGNN~\cite{Cao2020}  performs graph convolutions based on Graph Fourier Transform (GFT) and computes series correlations based on Discrete Fourier Transform.
Autoformer~\cite{autoformer21} replaces self-attention by proposing the auto-correlation mechanism implemented with Fast Fourier Transforms (FFT). 
FEDformer~\cite{fedformer22} proposes a DFT-based frequency enhanced attention, which obtains the attentive weights by the spectrums of queries and keys, and calculates the weighted sum in the frequency domain.  
CoST~\cite{cost22} uses DFT to map the intermediate features to frequency domain to enables interactions in representation.
FiLM~\cite{FiLM_2022} utilizes Fourier analysis to preserve historical information and remove noisy signals.
Unlike these efforts that leverage frequency techniques to improve upon the original architecture such as Transformer and GNN, in this paper, we propose a new frequency learning architecture that learns both channel-wise and time-wise dependencies in the frequency domain.

\paragraph{MLP-based Forecasting Models}
Several studies have explored the use of MLP-based networks in time series forecasting. 
N-BEATS~\cite{oreshkin2019n} utilizes stacked MLP layers together with doubly residual learning to process the input data to iteratively forecast the future. DEPTS~\cite{depts_2022} applies Fourier transform to extract periods and MLPs for periodicity dependencies for univariate forecasting. LightTS~\cite{zhang2022less} uses lightweight sampling-oriented MLP structures to reduce complexity and computation time while maintaining accuracy. N-HiTS~\cite{nhits_2022} combines multi-rate input sampling and hierarchical interpolation with MLPs for univariate forecasting.
LTSF-Linear~\cite{DLinear_2023} proposes a set of embarrassingly simple one-layer linear model to learn temporal relationships between input and output sequences.
These studies demonstrate the effectiveness of MLP-based networks in time series forecasting tasks, and inspire the development of our frequency-domain MLPs in this paper.

\section{FreTS}
In this section, we elaborate on our proposed novel approach, FreTS, based on our redesigned MLPs in the frequency domain for time series forecasting. First, we present the detailed \textit{frequency learning architecture} of FreTS in Section \ref{sec:fre_arch}, which mainly includes two-fold frequency learners with domain conversions. Then, we detailedly introduce our redesigned \textit{frequency-domain MLPs} adopted by above frequency learners in Section \ref{sec:fremlp}. 
Besides, we also theoretically analyze their superior nature of global view and energy compaction, as aforementioned in Section \ref{sec:intro}.

\paragraph{Problem Definition}
Let $[ {X}_1, {X}_2, \cdots, {X}_T ] \in \mathbb{R}^{N \times T}$ stand for the regularly sampled multi-variate time series dataset with $N$ series and $T$ timestamps, where ${X}_t \in \mathbb{R}^N$ denotes the multi-variate values of $N$ distinct series at timestamp $t$.
We consider a time series lookback window of length-$L$ at timestamp $t$ as the model input, namely $\mathbf{X}_t = [{X}_{t-L+1}, {X}_{t-L+2}, \cdots, {X}_{t} ] \in \mathbb{R}^{N \times L}$; also, we consider a horizon window of length-$\tau$ at timestamp $t$ as the prediction target, denoted as $\mathbf{Y}_t = [{X}_{t+1}, {X}_{t+2}, \cdots, {X}_{t+\tau} ] \in \mathbb{R}^{N \times \tau}$. Then the time series forecasting formulation is to use historical observations $\mathbf{X}_t$ to predict future values $\hat{\mathbf{Y}}_t$ and the typical forecasting model $f_\theta$ parameterized by $\theta$ is to produce forecasting results by $\hat{\mathbf{Y}}_t = f_\theta (\mathbf{X}_t)$.

\begin{figure}
    \centering
    \includegraphics[width=1\linewidth]{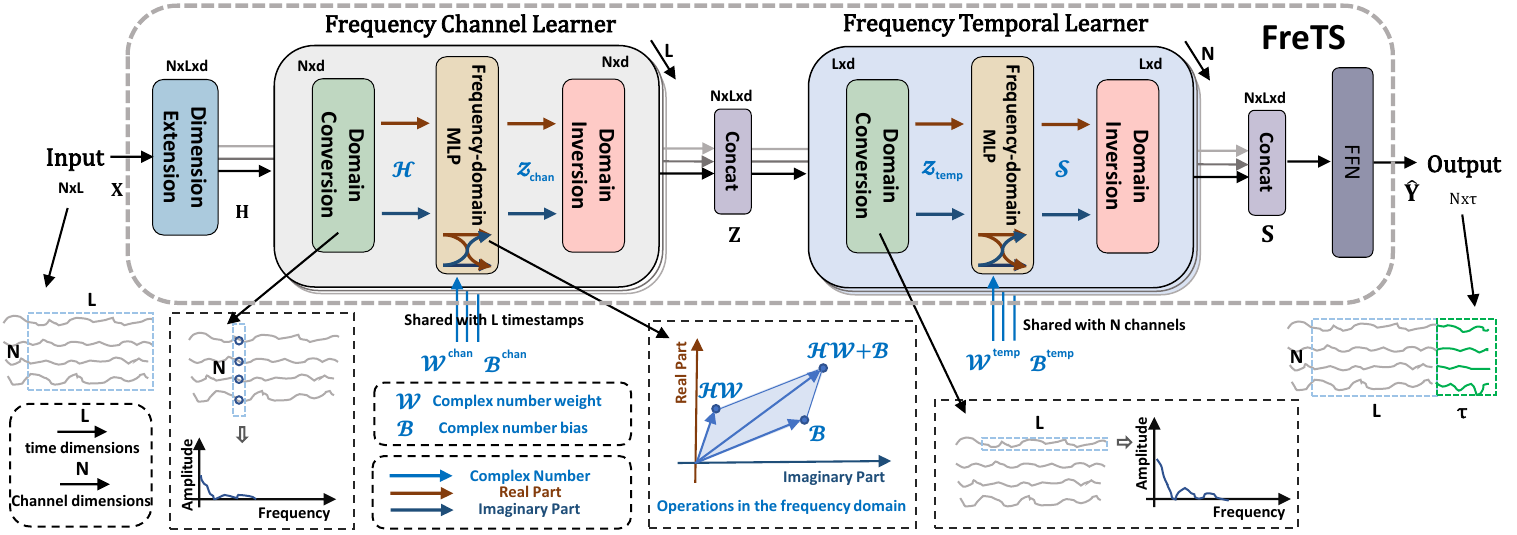}
    \vspace{-2mm}
    \caption{The framework overview of FreTS: the Frequency Channel Learner focuses on modeling inter-series dependencies with frequency-domain MLPs operating on the channel dimensions; the Frequency Temporal Learner is to capture the temporal dependencies by performing frequency-domain MLPs on the time dimensions.}
    \label{fig:model}
\end{figure}
\vspace{-3mm}

\subsection{Frequency Learning Architecture} \label{sec:fre_arch}
The frequency learning architecture of FreTS is depicted in Figure \ref{fig:model}, which mainly involves Domain Conversion/Inversion stages, Frequency-domain MLPs, and the corresponding two learners, i.e., the Frequency Channel Learner and the Frequency Temporal Learner. Besides, before taken to learners, we concretely apply a \textit{dimension extension} block on model input to enhance the model capability. Specifically, the input lookback window $\mathbf{X}_t \in \mathbb{R}^{N \times L}$ is multiplied with a learnable weight vector $\phi_d \in \mathbb{R}^{1 \times d}$ to obtain a more expressive hidden representation $\mathbf{H}_t \in \mathbb{R}^{N \times L \times d}$, yielding $\mathbf{H}_t=\mathbf{X}_t \times \phi_d$ to bring more semantic information, inspired by word embeddings~\cite{word_embedding_2013}.


\paragraph{Domain Conversion/Inversion} The use of Fourier transform enables the decomposition of a time series signal into its constituent frequencies. 
This is particularly advantageous for time series analysis since it benefits to identify periodic or trend patterns in the data, which are often important in forecasting tasks. 
As aforementioned in Figure \ref{fig:1a}, learning in the frequency spectrum helps capture a greater number of periodic patterns. In view of this, we convert the input $\mathbf{H}$ into the frequency domain $\mathcal{H}$ by:
\begin{equation}\label{fft}
 \mathcal{H}(f) = \int_{-\infty }^{\infty} \mathbf{H}(v) e^{-j2\pi fv}\mathrm{d}v = \int_{-\infty }^{\infty} \mathbf{H}(v) \cos(2\pi fv)\mathrm{d}v + j \int_{-\infty }^{\infty} \mathbf{H}(v) \sin(2\pi fv)\mathrm{d}v 
\end{equation}
where $f$ is the frequency variable, $v$ is the integral variable, and $j$ is the imaginary unit, which is defined as the square root of -1; 
$\int_{-\infty }^{\infty} \mathbf{H}(v) \cos(2\pi fv)\mathrm{d}v$ is the real part of $\mathcal{H}$ and is abbreviated as $Re(\mathcal{H})$;  $\int_{-\infty }^{\infty} \mathbf{H}(v) \sin(2\pi fv)\mathrm{d}v$ is the imaginary part and is abbreviated as $Im(\mathcal{H})$. Then we can rewrite $\mathcal{H}$ in Equation (\ref{fft}) as: $\mathcal{H}=Re(\mathcal{H})+jIm(\mathcal{H})$. Note that in FreTS we operate domain conversion on both the channel dimension and time dimension, respectively.
Once completing the learning in the frequency domain, we can convert $\mathcal{H}$ back into the the time domain using the following inverse conversion formulation:
\begin{equation}\label{ifft}
    \mathbf{H}(v) = \int_{-\infty }^{\infty} \mathcal{H}(f) e^{j2\pi fv}\mathrm{d}f = \int_{-\infty }^{\infty} (Re(\mathcal{H}(f))+jIm(\mathcal{H}(f)) e^{j2\pi fv}\mathrm{d}f
\end{equation}
where we take frequency $f$ as the integral variable. In fact, the frequency spectrum is expressed as a combination of $\cos$ and $\sin$ waves in $\mathcal{H}$ with different frequencies and amplitudes inferring different periodic properties in time series signals. Thus examining the frequency spectrum can better discern the prominent frequencies and periodic patterns in time series. 
In the following sections, we use $\operatorname{DomainConversion}$ to stand for Equation (\ref{fft}), and $\operatorname{DomainInversion}$ for Equation (\ref{ifft}) for brevity.



\paragraph{Frequency Channel Learner}
Considering channel dependencies for time series forecasting is important because it allows the model to capture interactions and correlations between different variables, leading to a more accurate predictions.
The frequency channel learner enables communications between different channels; it operates on each timestamp by sharing the same weights between $L$ timestamps to learn channel dependencies. 
Concretely, the frequency channel learner takes $\mathbf{H}_t \in \mathbb{R}^{N \times L \times d}$ as input. 
Given the $l$-th timestamp $\mathbf{H}_t^{:,(l)} \in \mathbb{R}^{N \times d}$, we perform the \textit{frequency channel learner} by:
\begin{equation}
\begin{split}
     \mathcal{H}_{chan}^{:,(l)} &= \operatorname{DomainConversion}_{(chan)}(\mathbf{H}_t^{:,(l)})  \\
     \mathcal{Z}_{chan}^{:,(l)} &= \operatorname{FreMLP}_{}(\mathcal{H}_{chan}^{:,(l)}, \mathcal{W}^{chan}, \mathcal{B}^{chan})\\
     \mathbf{Z}^{:,(l)} &= \operatorname{DomainInversion}_{(chan)}(\mathcal{Z}_{chan}^{:,(l)}) 
\end{split}
\end{equation}
where $\mathcal{H}_{chan}^{:,(l)} \in \mathbb{C}^{\frac{N}{2} \times d}$ is the frequency components of $\mathbf{H}_t^{:,(l)}$; $\operatorname{DomainConversion}_{(chan)}$ and $\operatorname{DomainInversion}_{(chan)}$ indicates such operations are performed along the channel dimension. 
$\operatorname{FreMLP}$ are frequency-domain MLPs proposed in Section \ref{sec:fremlp}, which takes
$\mathcal{W}^{chan}=(\mathcal{W}^{chan}_{r}+j\mathcal{W}^{chan}_{i}) \in \mathbb{C}^{d \times d}$ as the complex number weight matrix with $\mathcal{W}^{chan}_{r} \in \mathbb{R}^{d \times d}$ and $\mathcal{W}^{chan}_{i} \in \mathbb{R}^{d \times d}$, and $\mathcal{B}^{chan}=(\mathcal{B}^{chan}_{r}+j\mathcal{B}^{chan}_{i}) \in \mathbb{C}^{d}$ as the biases with $\mathcal{B}^{chan}_{r} \in \mathbb{R}^{d}$ and $\mathcal{B}^{chan}_{i} \in \mathbb{R}^{d}$. 
And $\mathcal{Z}_{chan}^{:,(l)} \in \mathbb{C}^{\frac{N}{2} \times d}$ is the output of $\operatorname{FreMLP}$, also in the frequency domain, which is conversed back to time domain as $\mathbf{Z}^{:,(l)} \in \mathbb{R}^{N \times d}$. Finally, we ensemble $\mathbf{Z}^{:,(l)}$ of $L$ timestamps into a whole and output $\mathbf{Z}_t \in \mathbb{R}^{N \times L \times d}$.


\paragraph{Frequency Temporal Learner}
The frequency temporal learner aims to learn the temporal patterns in the frequency domain; also, it is constructed based on frequency-domain MLPs conducting on each channel and it shares the weights between $N$ channels.
Specifically, it takes the frequency channel learner output $\mathbf{Z}_t \in \mathbb{R}^{N \times L \times d}$ as input and for the $n$-th channel  $\mathbf{Z}_t^{(n),:} \in \mathbb{R}^{L \times d}$, we apply the $\textit{frequency temporal learner}$ by:
\begin{eqnarray}
   && \mathcal{Z}_{temp}^{(n),:} = \operatorname{DomainConversion}_{(temp)}(\mathbf{Z}^{(n),:}_t) \nonumber \\
   && \mathcal{S}_{temp}^{(n),:} =\operatorname{FreMLP}(\mathcal{Z}_{temp}^{(n),:}, \mathcal{W}^{temp}, \mathcal{B}^{temp})\\
   && \mathbf{S}^{(n),:} = \operatorname{DomainInversion}_{(temp)}(\mathcal{S}_{temp}^{(n),:}) \nonumber
\end{eqnarray}
where $\mathcal{Z}_{temp}^{(n),:} \in \mathbb{C}^{\frac{L}{2} \times d}$ is the corresponding frequency spectrum of $\mathbf{Z}^{(n),:}_t$; $\operatorname{DomainConversion}_{(temp)}$ and $\operatorname{DomainInversion}_{(temp)}$ indicates the calculations are applied along the time dimension. $\mathcal{W}^{temp}=(\mathcal{W}^{temp}_{r}+j\mathcal{W}^{temp}_{i}) \in \mathbb{C}^{d \times d}$ is the complex number weight matrix with $\mathcal{W}^{temp}_{r} \in \mathbb{R}^{d \times d}$ and $\mathcal{W}^{temp}_{i} \in \mathbb{R}^{d \times d}$, and $\mathcal{B}^{temp}=(\mathcal{B}^{temp}_{r}+j\mathcal{B}^{temp}_{i}) \in \mathbb{C}^{d}$ are the complex number biases with $\mathcal{B}^{temp}_{r} \in \mathbb{R}^{d}$ and $\mathcal{B}^{temp}_{i} \in \mathbb{R}^{d}$. $\mathcal{S}_{temp}^{(n),:}\in \mathbb{C}^{\frac{L}{2} \times d}$ is the output of $\operatorname{FreMLP}$ and is converted back to the time domain as $\mathbf{S}^{(n),:} \in \mathbb{R}^{L \times d}$. Finally, we incorporate all channels and output $\mathbf{S}_t \in \mathbb{R}^{N \times L \times d}$.

\paragraph{Projection}
Finally, we use the learned channel and temporal dependencies to make predictions for the future $\tau$ timestamps $\hat{\mathbf{Y}}_t \in \mathbb{R}^{N \times \tau}$ by a two-layer feed forward network (FFN) with one forward step which can avoid error accumulation, formulated as follows:
\begin{equation}
    \label{equ:ffn}
        \hat{\mathbf{Y}}_t=\sigma(\mathbf{S}_t\phi_1+\mathbf{b}_1)\phi_2+\mathbf{b}_2
\end{equation}
where $\mathbf{S}_t \in \mathbb{R}^{N \times L \times d}$ is the output of the frequency temporal learner, $\sigma$ is the activation function, $\phi_1 \in \mathbb{R}^{(L*d)\times d_h}, \phi_2 \in \mathbb{R}^{d_h \times \tau}$ are the weights, $\mathbf{b}_1 \in \mathbb{R}^{d_h}$, $\mathbf{b}_2 \in \mathbb{R}^{\tau}$ are the biases, and $d_h$ is the inner-layer dimension size.


\subsection{Frequency-domain MLPs} \label{sec:fremlp}
As shown in Figure \ref{fig:complex-value-mlp}, we elaborate our novel frequency-domain MLPs in FreTS that are redesigned for the {complex numbers} of frequency components, in order to effectively capture the time series key patterns with \textit{global view} and \textit{energy compaction}, as aforementioned in Section \ref{sec:intro}. 

\begin{myDef}[\textbf{Frequency-domain MLPs}] 
Formally, for a complex number input $\mathcal{H} \in \mathbb{C}^{m \times d}$, given a complex number weight matrix $\mathcal{W} \in \mathbb{C}^{d \times d}$ and a complex number bias $\mathcal{B}\in \mathbb{C}^{d}$, then the frequency-domain MLPs can be formulated as:
\begin{equation} \label{equ:fremlp}
\begin{split}
    \mathcal{Y}^{\ell} &= \sigma(\mathcal{Y}^{\ell-1}\mathcal{W}^{\ell}+\mathcal{B}^{\ell}) \\
    \mathcal{Y}^{0} &= \mathcal{H} 
\end{split}
\end{equation} where $\mathcal{Y}^{\ell} \in \mathbb{C}^{m \times d}$ is the final output, $\ell$ denotes the $\ell$-th layer, and $\sigma$ is the activation function. 
\end{myDef}
As both $\mathcal{H}$ and $\mathcal{W}$ are complex numbers, according to the rule of multiplication of complex numbers (details can be seen in Appendix \ref{complex_multiplication}), we further extend the Equation (\ref{equ:fremlp}) to:
\begin{equation}\label{equ:complex_multiply}
    \mathcal{Y}^{\ell} = \sigma(Re(\mathcal{Y}^{\ell-1})\mathcal{W}^{\ell}_r-Im(\mathcal{Y}^{\ell-1})\mathcal{W}^{\ell}_i+\mathcal{B}^{\ell}_r)+j\sigma(Re(\mathcal{Y}^{\ell-1})\mathcal{W}^{\ell}_i+Im(\mathcal{Y}^{\ell-1})\mathcal{W}^{\ell}_r+\mathcal{B}^{\ell}_i)
\end{equation}
where $\mathcal{W}^{\ell}=\mathcal{W}^{\ell}_r+j\mathcal{W}^{\ell}_i$ and $\mathcal{B}^{\ell}=\mathcal{B}^{\ell}_r+j\mathcal{B}^{\ell}_i$.
According to the equation, we implement the MLPs in the frequency domain (abbreviated as $\operatorname{FreMLP}$) by the separate computation of the real and imaginary parts of frequency components. Then, we stack them to form a complex number to acquire the final results. The specific implementation process is shown in Figure \ref{fig:complex-value-mlp}.
\begin{myTheo}
Suppose that $\mathbf{H}$ is the representation of raw time series and $\mathcal{H}$ is the corresponding frequency components of the spectrum, then the energy of a time series in the time domain is equal to the energy of its representation in the frequency domain. Formally, we can express this with above notations by:
\begin{equation}
    \int_{-\infty }^{\infty} |\mathbf{H}(v)|^2\mathrm{d}v= \int_{-\infty }^{\infty} |\mathcal{H}(f)|^2 \mathrm{d}f
\end{equation} 
where $\mathcal{H}(f)=\int_{-\infty }^{\infty}\mathbf{H}(v) e^{-j2\pi fv} \mathrm{d}v $, $v$ is the time/channel dimension, $f$ is the frequency dimension. 

\end{myTheo}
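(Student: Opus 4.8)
The plan is to recognize the claim as the continuous Parseval--Plancherel identity and to prove it directly from the transform pair in Equations (\ref{fft}) and (\ref{ifft}). I would begin with the left-hand side, writing $\int_{-\infty}^{\infty} |\mathbf{H}(v)|^2\,\mathrm{d}v = \int_{-\infty}^{\infty} \mathbf{H}(v)\,\overline{\mathbf{H}(v)}\,\mathrm{d}v$, and then substitute for the conjugate factor using the inverse transform of Equation (\ref{ifft}), namely $\overline{\mathbf{H}(v)} = \int_{-\infty}^{\infty} \overline{\mathcal{H}(f)}\,e^{-j2\pi fv}\,\mathrm{d}f$. This rewrites the target as a double integral $\int_{-\infty}^{\infty}\!\!\int_{-\infty}^{\infty} \mathbf{H}(v)\,\overline{\mathcal{H}(f)}\,e^{-j2\pi fv}\,\mathrm{d}f\,\mathrm{d}v$.

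Next I would interchange the order of integration and carry out the $v$-integration first. The key observation is that $\int_{-\infty}^{\infty} \mathbf{H}(v)\,e^{-j2\pi fv}\,\mathrm{d}v$ is precisely $\mathcal{H}(f)$ by the definition in Equation (\ref{fft}). Hence the double integral collapses to $\int_{-\infty}^{\infty} \overline{\mathcal{H}(f)}\,\mathcal{H}(f)\,\mathrm{d}f = \int_{-\infty}^{\infty} |\mathcal{H}(f)|^2\,\mathrm{d}f$, which is the right-hand side, completing the argument. An equivalent route starts from the right-hand side, expands $|\mathcal{H}(f)|^2 = \mathcal{H}(f)\overline{\mathcal{H}(f)}$ into a triple integral over $v, v', f$, and then uses the Fourier representation of the Dirac delta, $\int_{-\infty}^{\infty} e^{-j2\pi f(v - v')}\,\mathrm{d}f = \delta(v - v')$, together with its sifting property, to reduce the expression back to $\int_{-\infty}^{\infty} |\mathbf{H}(v)|^2\,\mathrm{d}v$; both routes amount to the same computation organized differently.

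The main obstacle is justifying the interchange of the order of integration: the basic form of Fubini's theorem requires absolute integrability of the joint integrand, which fails for a bare complex exponential (equivalently, the delta-function identity above is only a distributional statement). The standard remedy is to establish the identity first on a dense subclass of well-behaved signals --- for instance Schwartz functions, or $L^1 \cap L^2$ functions after inserting a Gaussian damping factor $e^{-\epsilon f^2}$ that is removed by letting $\epsilon \to 0$ --- and then to extend it to all finite-energy signals by a density and continuity argument. For the purposes of FreTS this subtlety is essentially moot, since $\mathbf{H}$ is in practice a finite-length sequence and the statement actually invoked in the energy-compaction analysis is the discrete Parseval identity, which follows immediately from the unitarity (orthogonality of the rows) of the DFT matrix, so that $\sum_v |\mathbf{H}[v]|^2$ equals $\sum_f |\mathcal{H}[f]|^2$ up to the chosen normalization constant. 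I would note this discrete version explicitly as the form used in the paper, where the convergence issues of the continuous argument do not arise.
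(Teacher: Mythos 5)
Your main argument is exactly the paper's own proof: expand $|\mathbf{H}(v)|^2=\mathbf{H}(v)\mathbf{H}^{*}(v)$, substitute the inverse-transform representation for the conjugate factor, swap the order of integration, and recognize the inner $v$-integral as $\mathcal{H}(f)$, yielding $\int|\mathcal{H}(f)|^2\,\mathrm{d}f$. Your added remarks on justifying the interchange (Fubini/density or the distributional delta route) and on the discrete Parseval identity actually being what FreTS uses are sound refinements the paper omits, but the core derivation is the same.
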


\begin{wrapfigure}{r}{5cm}
\centering
\vspace{-7mm}
 \includegraphics[width=0.3\textwidth]{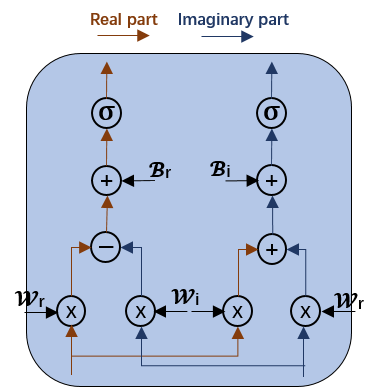}
 \caption{One layer of the frequency-domain MLPs.}
\label{fig:complex-value-mlp}
\end{wrapfigure}

We include the proof in Appendix \ref{energy_compaction_proof}.
The theorem implies that if most of the energy of a time series is concentrated in a small number of frequency components, then the time series can be accurately represented using only those components. Accordingly, discarding the others would not significantly affect the signal's energy. 
As shown in Figure \ref{fig:1b}, in the frequency domain, the energy concentrates on the smaller part of frequency components, thus learning in the frequency spectrum can facilitate preserving clearer patterns.
\begin{myTheo}
Given the time series input $\mathbf{H}$ and its corresponding frequency domain conversion $\mathcal{H}$, the operations of frequency-domain MLP on $\mathcal{H}$ can be represented as global convolutions on $\mathbf{H}$ in the time domain. This can be given by:
\begin{equation}
    \mathcal{H}\mathcal{W}+\mathcal{B}= \mathcal{F}(\mathbf{H}\ast W+B)
\end{equation}
where $\ast$ is a circular convolution, $\mathcal{W}$ and $\mathcal{B}$ are the complex number weight and bias, $W$ and $B$ are the weight and bias in the time domain, and $\mathcal{F}$ is $\operatorname{DFT}$.
\end{myTheo}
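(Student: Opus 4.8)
The plan is to prove the identity by reading it right to left and invoking the circular convolution theorem for the discrete Fourier transform, $\mathcal{F}$ being $\operatorname{DFT}$. First I would fix the axis along which $\mathcal{F}$ acts --- the channel axis in the Frequency Channel Learner, the time axis in the Frequency Temporal Learner --- and write $\mathcal{F}$ as the corresponding DFT matrix $\mathbf{F}$, so that $\mathcal{H}=\mathbf{F}\mathbf{H}$, with the caveat (treated below) that $\operatorname{DomainConversion}$ keeps only the non-redundant half of the spectrum of the real input $\mathbf{H}$. The only structural fact I need is that circular convolution along that axis, $\mathbf{H}\ast W$, is the action of the circulant matrix generated by $W$, and that every such circulant is diagonalized by $\mathbf{F}$ with eigenvalues $\mathbf{F}W$; equivalently, $\mathcal{F}(\mathbf{H}\ast W)=(\mathbf{F}W)\odot(\mathbf{F}\mathbf{H})$, the elementwise product.

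Granting that, the core argument is two lines. Since $\mathbf{F}$ is invertible I would \emph{define} the time-domain kernel and bias by $W=\mathbf{F}^{-1}\mathcal{W}$ and $B=\mathbf{F}^{-1}\mathcal{B}$ --- this is precisely the sense in which the frequency-domain parameters \emph{are} a global convolution. Then $\mathcal{H}\mathcal{W}=(\mathbf{F}\mathbf{H})\odot(\mathbf{F}W)=\mathcal{F}(\mathbf{H}\ast W)$ by the convolution theorem, and by linearity of $\mathcal{F}$, $\mathcal{H}\mathcal{W}+\mathcal{B}=\mathcal{F}(\mathbf{H}\ast W)+\mathcal{F}(B)=\mathcal{F}(\mathbf{H}\ast W+B)$, which is the claim. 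I would emphasize the word \emph{global}: unlike a CNN kernel with a small receptive field, $W$ generically has support over the entire window, so each output coordinate depends on the whole input sequence. I would also note that the theorem is stated (correctly) for the affine map $\mathcal{H}\mathcal{W}+\mathcal{B}$ of a single layer: inserting the activation $\sigma$ or stacking layers destroys the exact convolution identity, so the result should be claimed only in this form.

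To match the actual $\operatorname{FreMLP}$, which carries a feature dimension $d$ and whose weight $\mathcal{W}\in\mathbb{C}^{d\times d}$ multiplies \emph{along} $d$ while $\mathcal{F}$ acts along the orthogonal (channel or time) axis, I would run the scalar statement coordinatewise in the feature indices: for each pair $(p,q)$ the map $\mathcal{H}_{:,p}\mapsto\mathcal{W}_{pq}\mathcal{H}_{:,p}$ is multiplication by a frequency-independent scalar, hence time-domain convolution with $W_{pq}=\mathbf{F}^{-1}(\mathcal{W}_{pq}\mathbf{1})$; the matrix product sums these over $p$, which commutes with $\mathcal{F}$ by linearity, so assembling the $W_{pq}$ into one channel-mixing kernel $W$ and broadcasting $B=\mathbf{F}^{-1}\mathcal{B}$ along the transform axis recovers the stated equation for the full tensor operation.

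The step I expect to need the most care is the bookkeeping around the real-input spectrum. $\operatorname{DomainConversion}$ retains only $N/2$ (resp.\ $L/2$) frequency components, and $\operatorname{DomainInversion}$, an inverse real DFT, implicitly reconstructs the conjugate-symmetric completion of whatever it is handed; so I must check that ``multiply the half-spectrum by $\mathcal{W}$, add $\mathcal{B}$, then inverse real-DFT'' equals ``circular-convolve the full real signal over $\mathbb{Z}_N$ (resp.\ $\mathbb{Z}_L$) with a real kernel $W$ and add a real bias $B$''. Concretely this forces $W$ to be $\mathbf{F}^{-1}$ of the \emph{Hermitian extension} of $\mathcal{W}$ rather than of $\mathcal{W}$ literally (similarly for $B$); once that identification is pinned down the remainder is exactly the convolution theorem plus linearity of $\mathcal{F}$. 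If instead one carries the full complex DFT throughout and defers the projection back to reals to $\operatorname{DomainInversion}$, the identification is literal and this subtlety disappears.
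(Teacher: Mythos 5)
Your proposal is correct, and it rests on the same key fact as the paper's proof -- the convolution theorem -- but you and the paper spend your effort on complementary halves of the argument. The paper's Appendix proof is essentially a from-scratch derivation of the convolution theorem in the continuous setting: it writes $\mathcal{F}(\mathbf{H}(v)\ast W(v))$ as a double integral, swaps the order of integration, substitutes $x=v-\tau$, and factors the exponential to get $\mathcal{H}(f)\mathcal{W}(f)$, then simply asserts that adding the bias and inverting gives the stated identity. You instead take the (discrete, circular) convolution theorem as known via circulant diagonalization by the DFT matrix, and put your work into exactly the identifications the paper leaves implicit: defining $W=\mathbf{F}^{-1}\mathcal{W}$ and $B=\mathbf{F}^{-1}\mathcal{B}$, handling the bias by linearity, reconciling the $d\times d$ feature-mixing weight (applied identically at every frequency, hence a frequency-independent multiplier per feature pair) with a transform-axis convolution, noting that the exact identity holds only for the single affine layer, and flagging the half-spectrum/Hermitian-extension bookkeeping forced by the real-input $\operatorname{DomainConversion}$. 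Your discrete treatment actually matches the theorem's statement (circular convolution, $\operatorname{DFT}$) more literally than the paper's continuous-integral computation does, and your observation that the shared-across-frequencies weight makes the induced kernel degenerate (a scaled delta per feature pair) is an honest refinement the paper glosses over; the paper's version, in exchange, is self-contained in that it does not presuppose the convolution theorem. Both routes establish the claimed equation.
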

The proof is shown in Appendix \ref{fremlp_analysis}. Therefore, the operations of $\operatorname{FreMLP}$, i.e., $\mathcal{H}\mathcal{W}+\mathcal{B}$, are equal to the operations $(\mathbf{H} \ast W+B)$ in the time domain. This implies that the operations of frequency-domain MLPs can be viewed as global convolutions in the time domain.


\section{Experiments}
To evaluate the performance of FreTS, we conduct extensive experiments on thirteen real-world time series benchmarks, covering short-term forecasting and long-term forecasting settings to compare with corresponding state-of-the-art methods. 

\paragraph{Datasets} Our empirical results are performed on various domains of datasets, including traffic, energy, web, traffic, electrocardiogram, and healthcare, etc. Specifically, for the task of \textit{short-term forecasting},
we adopt Solar~\footnote{\url{https://www.nrel.gov/grid/solar-power-data.html}}, Wiki~\cite{Sen2019}, Traffic~\cite{Sen2019}, Electricity~\footnote{\url{https://archive.ics.uci.edu/ml/datasets/ElectricityLoadDiagrams20112014}}, ECG~\cite{Cao2020}, METR-LA~\cite{LiYS018}, and COVID-19~\cite{tampsgcnets2022} datasets, following previous forecasting literature~\cite{Cao2020}.  For the task of \textit{long-term forecasting}, we adopt Weather~\cite{autoformer21}, Exchange~\cite{Lai2018}, Traffic~\cite{autoformer21}, Electricity~\cite{autoformer21}, and ETT datasets~\cite{Zhou2021}, following previous long time series forecasting works~\cite{Zhou2021,autoformer21,fedformer22,PatchTST2023}. 
We preprocess all datasets following \cite{Cao2020,Zhou2021,autoformer21} and normalize them with the min-max normalization. We split the datasets into training, validation, and test sets by the ratio of 7:2:1 except for the COVID-19 datasets with 6:2:2. More dataset details are in Appendix \ref{appendix_datasets}.


\paragraph{Baselines} We compare our FreTS with the representative and state-of-the-art models for both short-term and long-term forecasting to evaluate their effectiveness. For short-term forecasting, we compre FreTS against VAR \cite{Vector1993}, SFM \cite{ZhangAQ17},  LSTNet \cite{Lai2018}, TCN \cite{bai2018}, GraphWaveNet \cite{zonghanwu2019}, DeepGLO \cite{Sen2019}, StemGNN \cite{Cao2020}, MTGNN \cite{wu2020connecting}, and AGCRN \cite{Bai2020nips} for comparison. We also include TAMP-S2GCNets \cite{tampsgcnets2022}, DCRNN \cite{LiYS018} and STGCN \cite{Yu2018}, which require pre-defined graph structures, for comparison. For long-term forecasting, we include Informer~\cite{Zhou2021}, Autoformer~\cite{autoformer21}, Reformer~\cite{reformer20}, FEDformer~\cite{fedformer22}, LTSF-Linear~\cite{DLinear_2023}, and the more recent PatchTST~\cite{PatchTST2023} for comparison. Additional details about the baselines can be found in Appendix \ref{appendix_baselines}.

\paragraph{Implementation Details} Our model is implemented with Pytorch 1.8~\cite{PYTORCH19}, and all experiments are conducted on a single NVIDIA RTX 3080 10GB GPU. We take MSE (Mean Squared Error) as the loss function and report MAE (Mean Absolute Errors) and RMSE (Root Mean Squared Errors) results as the evaluation metrics. For additional implementation details, please refer to Appendix \ref{appendix_implement_details}.



\subsection{Main Results}

\begin{table*}[ht]
    \centering
    \caption{Short-term forecasting comparison. The best results are in \textbf{bold}, and the second best results are \underline{underlined}. Full benchmarks of short-term forecasting are in Appendix \ref{additional_multi_step}.}
    \vspace{1mm}
    \scalebox{0.75}{
    \begin{threeparttable}
    \begin{tabular}{l|c c|c c |c c | c c| c c| c c}
    \toprule
         Models & \multicolumn{2}{c|}{Solar} &  \multicolumn{2}{c|}{Wiki} & \multicolumn{2}{c|}{Traffic} & \multicolumn{2}{c|}{ECG} & \multicolumn{2}{c|}{Electricity} & \multicolumn{2}{c}{COVID-19}\\
         & MAE & RMSE  &  MAE & RMSE  & MAE & RMSE & MAE & RMSE & MAE & RMSE & MAE & RMSE \\
         \midrule
         VAR  & 0.184& 0.234  & 0.052& 0.094  & 0.535 &1.133 & 0.120& 0.170  & 0.101& 0.163  & 0.226 & 0.326 \\
         SFM & 0.161 & 0.283  & 0.081 & 0.156 & 0.029& 0.044 & 0.095 & 0.135  &0.086 &0.129  &0.205 &0.308 \\
         LSTNet  & 0.148 &0.200  & 0.054 &0.090 & 0.026 &0.057 & 0.079 &0.115 & 0.075 &0.138  & 0.248  & 0.305\\
         TCN  & 0.176 & 0.222  & 0.094 & 0.142  & 0.052 & 0.067 & 0.078 & 0.107  & 0.057 & 0.083  & 0.317 & 0.354\\
         DeepGLO  & 0.178 &0.400  & 0.110& 0.113  & 0.025 & 0.037 & 0.110 &0.163 & 0.090& 0.131 & 0.169 & 0.253\\
         Reformer & 0.234 & 0.292 &0.047 & 0.083& 0.029&0.042  & 0.062& 0.090 &0.078 & 0.129 &\underline{0.152} & \underline{0.209}\\
         Informer  & 0.151 & 0.199  & 0.051 & 0.086 & 0.020 &0.033 & 0.056 &0.085  & 0.074 & 0.123  & 0.200 & 0.259\\
         Autoformer & 0.150& 0.193 &0.069 & 0.103&0.029 & 0.043 & 0.055 &0.081  &{0.056} & {0.083} &0.159 & 0.211 \\
         FEDformer & 0.139&\underline{0.182} &0.068&0.098 &0.025&0.038  & \underline{0.055} & \underline{0.080}  & \underline{0.055}& \underline{0.081} & 0.160&0.219\\
         GraphWaveNet & 0.183 & 0.238  & 0.061 & 0.105  & \underline{0.013} & 0.034 & 0.093 & 0.142  & 0.094 & 0.140  & 0.201 & 0.255\\
         StemGNN & 0.176 &0.222  & 0.190 &0.255  & 0.080 &0.135& 0.100 &0.130  & 0.070 & 0.101  & 0.421 & 0.508 \\
         MTGNN & 0.151 & 0.207  & 0.101 & 0.140  & \underline{0.013} & \underline{0.030} & 0.090 & 0.139 & 0.077 & 0.113  & 0.394 & 0.488\\
         AGCRN & \underline{0.123} &0.214  & \underline{0.044} & \underline{0.079}  & 0.084& 0.166 & \underline{0.055} & \underline{0.080} & 0.074 & 0.116 & 0.254 & 0.309\\
         \midrule
         \textbf{FreTS (Ours)} & \textbf{0.120} &\textbf{0.162}  & \textbf{0.041} &\textbf{0.074}  & \textbf{0.011} &\textbf{0.023} & \textbf{0.053} &\textbf{0.078}  & \textbf{0.050} &\textbf{0.076}  & \textbf{0.123}  & \textbf{0.167}\\
    \bottomrule
    \end{tabular}
    \end{threeparttable}
    }
    \label{tab:result}
    \vspace{-3mm}
\end{table*}
\begin{table*}[h]
    \centering
    \caption{Long-term forecasting comparison. We set the lookback window size $L$ as 96 and the prediction length as $\tau \in \{96, 192, 336, 720\}$ except for traffic dataset whose prediction length is set as $\tau \in \{48, 96, 192, 336 \}$. The best results are in \textbf{bold} and the second best are \underline{underlined}. Full results of long-term forecasting are included in Appendix \ref{additional_long_term}.
    }
    \vspace{1mm}
    \scalebox{0.7}{
    \begin{tabular}{l c|c c|c c|c c| c c|c c |c c |c c}
    \toprule
       \multicolumn{2}{c|}{Models}  &\multicolumn{2}{c|}{FreTS} & \multicolumn{2}{c|}{PatchTST} & \multicolumn{2}{c|}{LTSF-Linear} & \multicolumn{2}{c|}{FEDformer} & \multicolumn{2}{c|}{Autoformer} & \multicolumn{2}{c|}{Informer} & \multicolumn{2}{c}{Reformer}\\
        \multicolumn{2}{c|}{Metrics}&MAE &RMSE &MAE &RMSE&MAE &RMSE&MAE &RMSE &MAE &RMSE &MAE &RMSE &MAE &RMSE\\
       \midrule
         \multirow{4}*{\rotatebox{90}{Weather}} &  96 & \textbf{0.032}& \textbf{0.071}& \underline{0.034}& \underline{0.074}& 0.040& 0.081&0.050 & 0.088&0.064 & 0.104& 0.101& 0.139&0.108 & 0.152\\
         & 192 & \textbf{0.040}& \textbf{0.081}& \underline{0.042}& \underline{0.084} & 0.048& 0.089& 0.051& 0.092& 0.061& 0.103&0.097 & 0.134& 0.147& 0.201\\
          &  336& \textbf{0.046}& \textbf{0.090}& \underline{0.049}& \underline{0.094}& 0.056& 0.098& 0.057& 0.100&0.059 &0.101 &0.115 & 0.155&0.154 & 0.203\\
         & 720 &\textbf{0.055} & \textbf{0.099}& \underline{0.056}& \underline{0.102}& 0.065& 0.106& 0.064 & 0.109&0.065 &0.110 &0.132 & 0.175&0.173 &0.228 \\
         \midrule
         \multirow{4}*{\rotatebox{90}{Exchange}} & 96 &\textbf{0.037} & \textbf{0.051}&0.039& \underline{0.052} &\underline{0.038} & \underline{0.052}& 0.050& 0.067& 0.050& 0.066& 0.066& 0.084& 0.126&0.146 \\
         & 192 & \textbf{0.050}& \textbf{0.067}&0.055& 0.074 & \underline{0.053}& \underline{0.069}& 0.064& 0.082& 0.063& 0.083& 0.068& 0.088& 0.147& 0.169\\
         & 336 & \textbf{0.062}& \textbf{0.082}&0.071& 0.093 & \underline{0.064}& \underline{0.085}& 0.080& 0.105& 0.075& 0.101& 0.093& 0.127& 0.157& 0.189\\
         & 720 & \textbf{0.088}& \textbf{0.110}&0.132& 0.166 & \underline{0.092}& \underline{0.116}& 0.151& 0.183& 0.150& 0.181& 0.117& 0.170& 0.166& 0.201\\
         \midrule
         \multirow{4}*{\rotatebox{90}{Traffic}} &  48 & \underline{0.018}& \underline{0.036}& \textbf{0.016} & \textbf{0.032} &0.020 & 0.039& 0.022& 0.036& 0.026& 0.042& 0.023& 0.039& 0.035 & 0.053 \\
         & 96 & \underline{0.020}& \underline{0.038}& \textbf{0.018}& \textbf{0.035} &0.022 & 0.042& 0.023& 0.044& 0.033& 0.050& 0.030& 0.047&0.035& 0.054\\
          &  192& \textbf{0.019}& \textbf{0.038}& \underline{0.020} & \underline{0.039} & \underline{0.020}& 0.040& 0.022& 0.042& 0.035& 0.053& 0.034& 0.053& 0.035& 0.054\\
         & 336 &\textbf{0.020} & \textbf{0.039}& \underline{0.021} & \underline{0.040} & \underline{0.021}& 0.041& 0.021& 0.040& 0.032& 0.050& 0.035&0.054 & 0.035& 0.055\\
         \midrule
         \multirow{4}*{\rotatebox{90}{Electricity}} &  96 & \textbf{0.039}& \textbf{0.065}& \underline{0.041}& \underline{0.067}& 0.045& 0.075& 0.049&0.072 & 0.051& 0.075& 0.094 & 0.124& 0.095& 0.125\\
         & 192 & \textbf{0.040}& \textbf{0.064}& \underline{0.042}& \underline{0.066}& 0.043& 0.070& 0.049& 0.072& 0.072& 0.099& 0.105& 0.138& 0.121& 0.152\\
          &  336& {0.046}& {0.072}& \textbf{0.043}& \textbf{0.067}& \underline{0.044}& \underline{0.071}& 0.051&0.075 & 0.084& 0.115& 0.112& 0.144& 0.122& 0.152\\
         & 720 &\textbf{0.052} & \textbf{0.079}& 0.055& {0.081}& \underline{0.054}& \underline{0.080}&0.055 & {0.077}& 0.088& 0.119& 0.116& 0.148&0.120 &0.151 \\
         \midrule
         \multirow{4}*{\rotatebox{90}{ETTh1}} &  96 & \textbf{0.061}& \textbf{0.087}& 0.065& 0.091& \underline{0.063}& \underline{0.089}& 0.072&0.096 &0.079 &0.105 &0.093 & 0.121&0.113 & 0.143\\
         & 192 & \textbf{0.065}& \textbf{0.091}& 0.069& \underline{0.094}& \underline{0.067}& \underline{0.094}& 0.076& 0.100&0.086 & 0.114& 0.103& 0.137& 0.120& 0.148\\
          &  336& \textbf{0.070}& \textbf{0.096}& \underline{0.073} & {0.099} & \textbf{0.070}& \underline{0.097}& 0.080& 0.105&0.088 & 0.119& 0.112& 0.145& 0.124& 0.155\\
         & 720 &\textbf{0.082} & \textbf{0.108}& \underline{0.087}& \underline{0.113}& \textbf{0.082}&\textbf{0.108} & 0.090& 0.116&0.102 & 0.136& 0.125& 0.157& 0.126& 0.155\\
         \midrule
         \multirow{4}*{\rotatebox{90}{ETTm1}} &  96 & \textbf{0.052}& \textbf{0.077}& \underline{0.055} & 0.082& \underline{0.055}& \underline{0.080}& 0.063&0.087 &0.081 &0.109 &0.070 & 0.096 &0.065 & 0.089\\
         & 192 & \textbf{0.057}& \textbf{0.083}& \underline{0.059}& \underline{0.085}& {0.060}& {0.087}& 0.068& 0.093 &0.083 & 0.112& 0.082& 0.107& 0.081& 0.108\\
          &  336& \textbf{0.062}& \textbf{0.089}& \underline{0.064} & \underline{0.091} & {0.065}& {0.093}& 0.075& 0.102&0.091 & 0.125& 0.090& 0.119& 0.100& 0.128\\
         & 720 &\textbf{0.069} & \textbf{0.096}& \underline{0.070}& \underline{0.097}& {0.072}&{0.099} & 0.081& 0.108&0.093 & 0.126 & 0.115& 0.149& 0.132& 0.163\\
         \bottomrule
    \end{tabular}
    }
    \label{tab:long_term}
\end{table*}
\textbf{Short-Term Time Series Forecasting} Table \ref{tab:result} presents the forecasting accuracy of our FreTS compared to thirteen baselines on six datasets, with an input length of 12 and a prediction length of 12. The best results are highlighted in bold and the second-best results are underlined. 
From the table, we observe that FreTS outperforms all baselines on MAE and RMSE across all datasets, and on average it makes improvement of 9.4\% on MAE and 11.6\% on RMSE. We credit this to the fact that FreTS explicitly models both channel and temporal dependencies, and it flexibly unifies channel and temporal modeling in the frequency domain, which can effectively capture the key patterns with the global view and energy compaction. 
We further report the complete benchmarks of short-term forecasting under different steps on different datasets (including METR-LA dataset) in Appendix \ref{additional_multi_step}.

\textbf{Long-term Time Series Forecasting} Table \ref{tab:long_term} showcases the long-term forecasting results of FreTS compared to six representative baselines on six benchmarks with various prediction lengths. For the traffic dataset, we select 48 as the lookback window size $L$ with the prediction lengths $\tau \in \{48, 96, 192, 336\}$. For the other datasets, the input lookback window length is set to 96 and the prediction length is set to $ \tau \in \{96, 192, 336, 720\}$. 
The results demonstrate that FreTS outperforms all baselines on all datasets. 
Quantitatively, compared with the best results of Transformer-based models, FreTS has an average decrease of more than 20\% in MAE and RMSE. Compared with more recent LSTF-Linear~\cite{DLinear_2023} and the SOTA PathchTST~\cite{PatchTST2023}, FreTS can still outperform them in general.
In addition, we provide further comparison of FreTS and other baselines and report performance under different lookback window sizes in Appendix \ref{additional_long_term}.
Combining Tables \ref{tab:result} and \ref{tab:long_term}, we can conclude that FreTS achieves competitive performance in both short-term and long-term forecasting task.

\subsection{Model Analysis} 
\begin{wraptable}{r}{8.5cm}
    \vspace{-6mm}
    \centering
    \caption{Ablation studies of frequency channel and temporal learners in both short-term and long-term forecasting. 'I/O' indicates lookback window sizes/prediction lengths. 
    }
    \vspace{1mm}
    \scalebox{0.7}{
    \begin{tabular}{ c | c c c c | c c c c}
    \toprule
    Tasks & \multicolumn{4}{c|}{Short-term} &  \multicolumn{4}{c}{Long-term}\\
    \midrule
    Dataset & \multicolumn{2}{c}{Electricity} & \multicolumn{2}{c|}{METR-LA} & \multicolumn{2}{c}{Exchange} & \multicolumn{2}{c}{Weather}\\
    I/O &  \multicolumn{2}{c}{12/12} &  \multicolumn{2}{c|}{12/12} &  \multicolumn{2}{c}{96/336} &  \multicolumn{2}{c}{96/336} \\
    \cmidrule(r){1-1} \cmidrule(r){2-3}  \cmidrule(r){4-5} \cmidrule(r){6-7} \cmidrule(r){8-9}
     Metrics & MAE & RMSE & MAE & RMSE & MAE & RMSE & MAE & RMSE \\
        \midrule
        FreCL &0.054 & 0.080 &0.086 & 0.168& 0.067 & 0.086 & 0.051 & 0.094 \\
         FreTL &0.058 & 0.086 &0.085 & 0.167& 0.065 & 0.085 & 0.047 & 0.091 \\
        \midrule
       FreTS & \textbf{0.050} & \textbf{0.076} &\textbf{0.080} & \textbf{0.166} & \textbf{0.062} & \textbf{0.082} & \textbf{0.046} & \textbf{0.090} \\
    \bottomrule
    \end{tabular}
    }
    \label{tab:frecl_tl}
    \vspace{-2mm}
\end{wraptable} 

\paragraph{Frequency Channel and Temporal Learners} We analyze the effects of frequency channel and temporal learners in Table \ref{tab:frecl_tl} in both short-term and long-term experimental settings. We consider two variants: \textbf{FreCL}: we remove the frequency temporal learner from FreTS, and \textbf{FreTL}: we remove the frequency channel learner from FreTS. From the comparison, we observe that the frequency channel learner plays a more important role in short-term forecasting. 
In long-term forecasting, we note that the frequency temporal learner is more effective than the frequency channel learner.
In Appendix \ref{appendix_ablation_study}, we also conduct the experiments and report performance on other datasets. Interestingly, we find out 
the channel learner would lead to the worse performance in some long-term forecasting cases. A potential explanation is that the channel independent strategy~\cite{PatchTST2023} brings more benefit to forecasting.

\paragraph{FreMLP vs. MLP} We further study the effectiveness of $\operatorname{FreMLP}$ in time series forecasting. We use $\operatorname{FreMLP}$ to replace the original MLP component in the existing SOTA MLP-based models (i.e., DLinear and NLinear~\cite{DLinear_2023}), and compare their performances with the original DLinear and NLinear under the same experimental settings. 
The experimental results are presented in Table \ref{tab:ablation_fremlp}. From the table, we easily observe that for any prediction length, the performance of both DLinear and NLinear models has been improved after replacing the corresponding MLP component with our $\operatorname{FreMLP}$. Quantitatively, incorporating $\operatorname{FreMLP}$ into the DLinear model brings an average improvement of 6.4\% in MAE and 11.4\% in RMSE on the Exchange dataset, and 4.9\% in MAE and 3.5\% in RMSE on the Weather dataset. A similar improvement has also been achieved on the two datasets with regard to NLinear, according to Table \ref{tab:ablation_fremlp}.
These results confirm the effectiveness of $\operatorname{FreMLP}$ compared to MLP again and we include more  implementation details and analysis in Appendix \ref{ablation-settings}.

\begin{table*}[!h]
    \centering
    \vspace{-2mm}
    \caption{Ablation study on the Exchange and Weather datasets with a lookback window size of 96 and the prediction length $\tau \in \{96, 192,336, 720\}$. DLinear (FreMLP)/NLinear (FreMLP) means that we replace the MLPs in DLinear/NLinear with $\operatorname{FreMLP}$. The best results are in \textbf{bold}. }
    \vspace{1mm}
    \scalebox{0.58}{
    \begin{tabular}{c |c c c c c c c c | c c c c c c c c}
    \toprule
    Datasets & \multicolumn{8}{c|}{Exchange} & \multicolumn{8}{c}{Weather} \\
    \cmidrule(r){1-1} \cmidrule(r){2-9} \cmidrule(r){10-17}
      Lengths& \multicolumn{2}{c}{96}   &  \multicolumn{2}{c}{192} & \multicolumn{2}{c}{336}  & \multicolumn{2}{c|}{720} & \multicolumn{2}{c}{96}   &  \multicolumn{2}{c}{192} & \multicolumn{2}{c}{336}  & \multicolumn{2}{c}{720} \\
      \cmidrule(r){1-1} \cmidrule(r){2-3} \cmidrule(r){4-5} \cmidrule(r){6-7} \cmidrule(r){8-9} \cmidrule(r){10-11} \cmidrule(r){12-13} \cmidrule(r){14-15} \cmidrule(r){16-17}
      Metrics & MAE & RMSE &  MAE & RMSE & MAE & RMSE & MAE & RMSE & MAE & RMSE &  MAE & RMSE & MAE & RMSE & MAE & RMSE\\
       \midrule
       DLinear & 0.037 & 0.051 & 0.054 & 0.072 & 0.071 & 0.095 & 0.095 & 0.119 & 0.041 & 0.081 & 0.047 & 0.089 & 0.056 & 0.098 & 0.065 & 0.106\\
       DLinear (FreMLP) & \textbf{0.036} & \textbf{0.049} & \textbf{0.053} & \textbf{0.071} & \textbf{0.063} & \textbf{0.071} & \textbf{0.086} & \textbf{0.101} & \textbf{0.038} & \textbf{0.078} & \textbf{0.045} & \textbf{0.086} & \textbf{0.055} & \textbf{0.097} & \textbf{0.061} & \textbf{0.100}\\
       \midrule
       NLinear & 0.037 & 0.051& 0.051 & 0.069 & 0.069 & 0.093 & 0.115 & 0.146 & 0.037 & 0.081 & 0.045 & 0.089 & 0.052 & 0.098 & 0.058 & 0.106\\
       NLinear (FreMLP) & \textbf{0.036} & \textbf{0.050} & \textbf{0.049} & \textbf{0.067} & \textbf{0.067} & \textbf{0.091} & \textbf{0.109} & \textbf{0.139} & \textbf{0.035} & \textbf{0.076}& \textbf{0.043} & \textbf{0.084} & \textbf{0.050} & \textbf{0.094} & \textbf{0.057} & \textbf{0.103}\\
    \bottomrule
    \end{tabular}
    }
    \label{tab:ablation_fremlp}
    \vspace{-3mm}
\end{table*}

\subsection{Efficiency Analysis}
The complexity of our proposed FreTS is $\mathcal{O}(N\operatorname{log}N + L\operatorname{log}L)$. We perform efficiency comparisons with some state-of-the-art GNN-based methods and Transformer-based models under different numbers of variables $N$ and prediction lengths $\tau$, respectively. On the Wiki dataset, we conduct experiments over $N\in\{1000, 2000, 3000, 4000, 5000\}$ under the same lookback window size of 12 and prediction length of 12, as shown in Figure \ref{fig:exp_efficiency_1}. From the figure, we can find that: (1) The amount of FreTS parameters is agnostic to $N$. (2) Compared with AGCRN, FreTS incurs an average 30\% reduction of the number of parameters and 20\% reduction of training time. On the Exchange dataset, we conduct experiments on different prediction lengths $\tau \in \{96, 192, 336, 480\}$ with the same input length of 96. The results are shown in Figure \ref{fig:exp_efficiency_2}. It demonstrates: (1) Compared with Transformer-based methods (FEDformer~\cite{fedformer22}, Autoformer~\cite{autoformer21}, and Informer~\cite{Zhou2021}), FreTS reduces the number of parameters by at least 3 times. (2) The training time of FreTS is averagely 3 times faster than Informer, 5 times faster than Autoformer, and more than 10 times faster than FEDformer. These show our great potential in real-world deployment.

\begin{figure}
    \centering
    \subfigure[Parameters (left) and training time (right) under different variable numbers]{
    \includegraphics[width=0.24\textwidth]{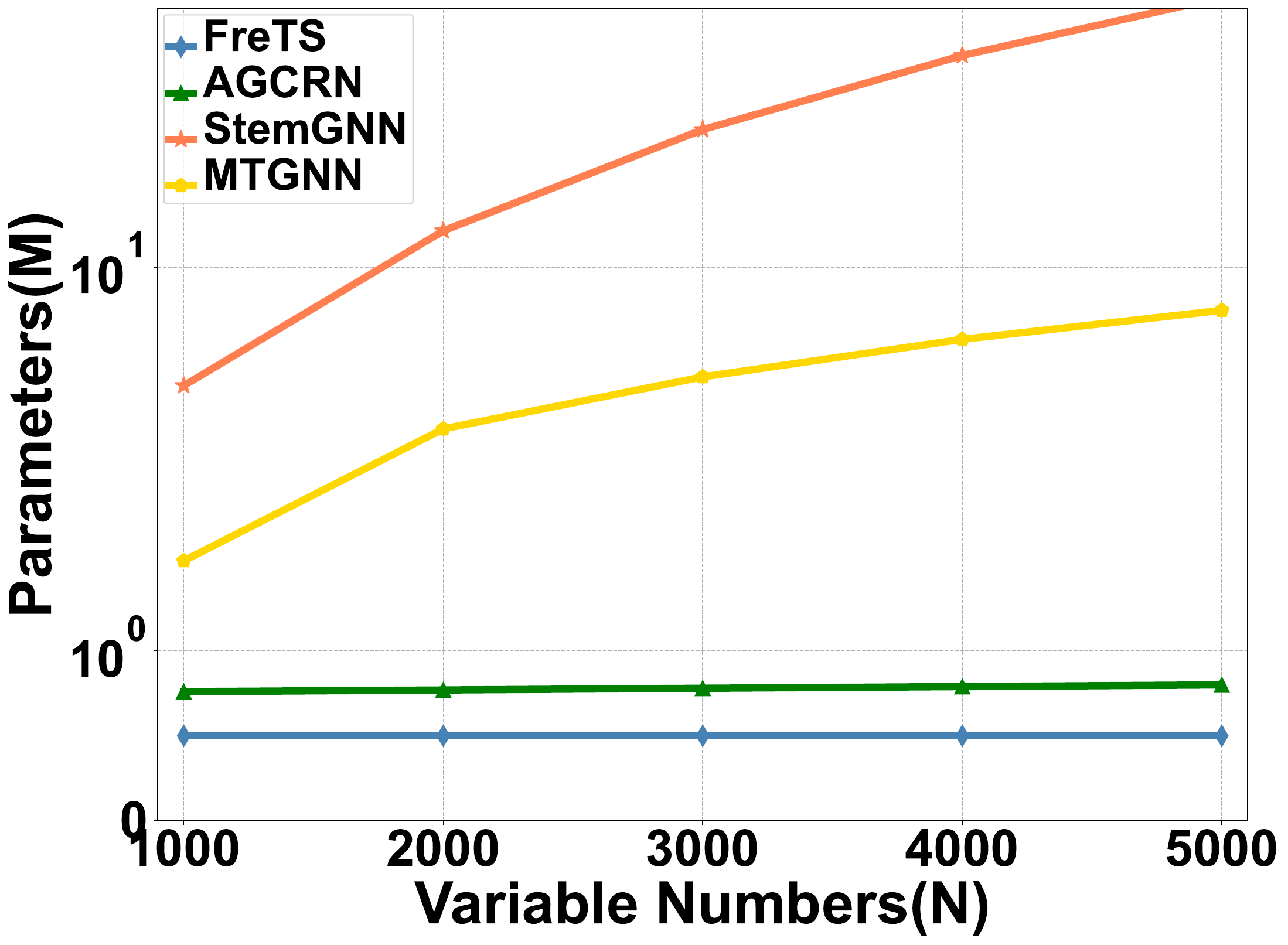}
    \hspace{-2mm}
    \includegraphics[width=0.24\textwidth]{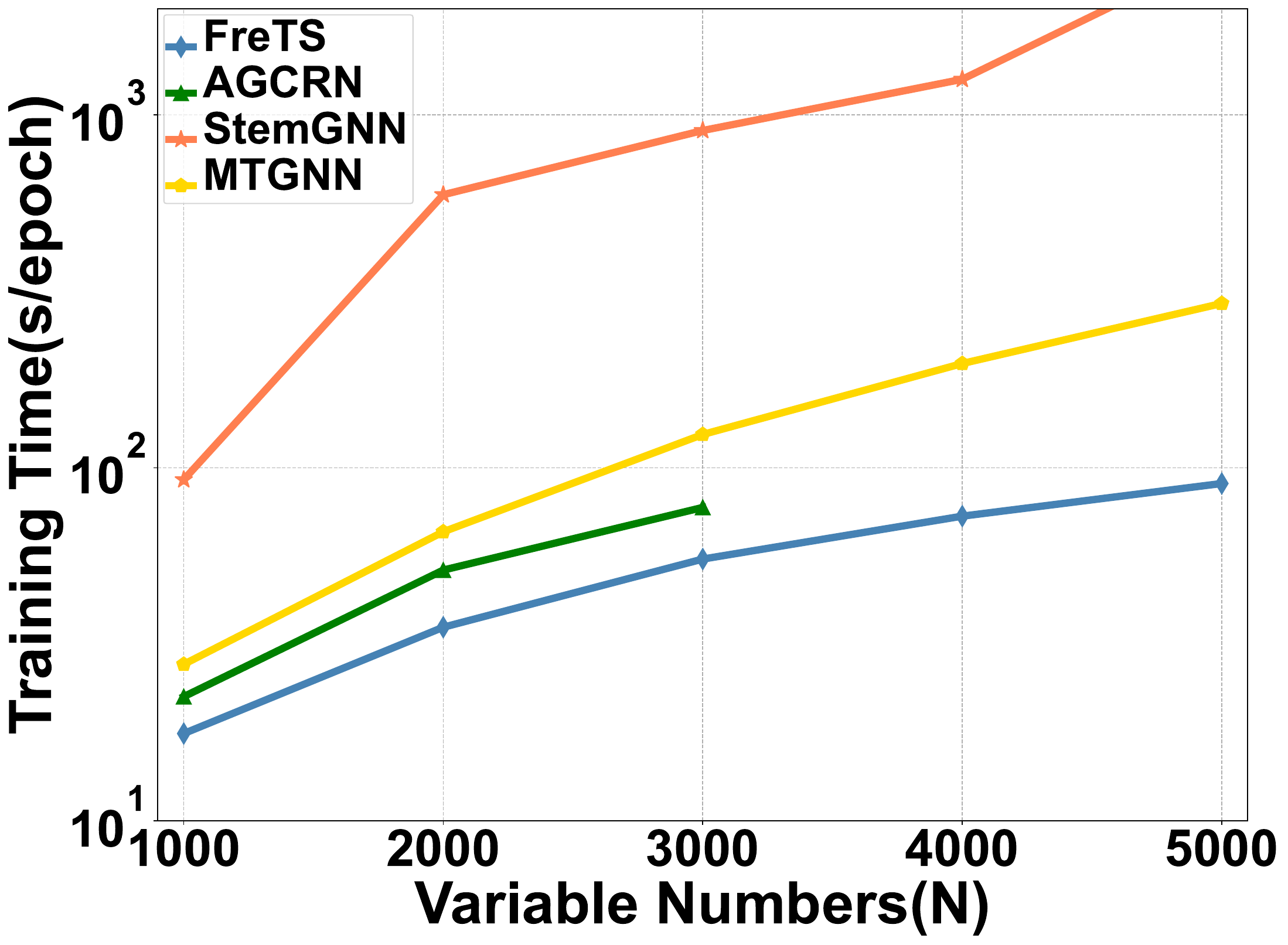}
    \label{fig:exp_efficiency_1}
    }
    \hspace{-0mm}
    \subfigure[Parameters (left) and training time (right) under different prediction lengths]{
    \includegraphics[width=0.232\textwidth]{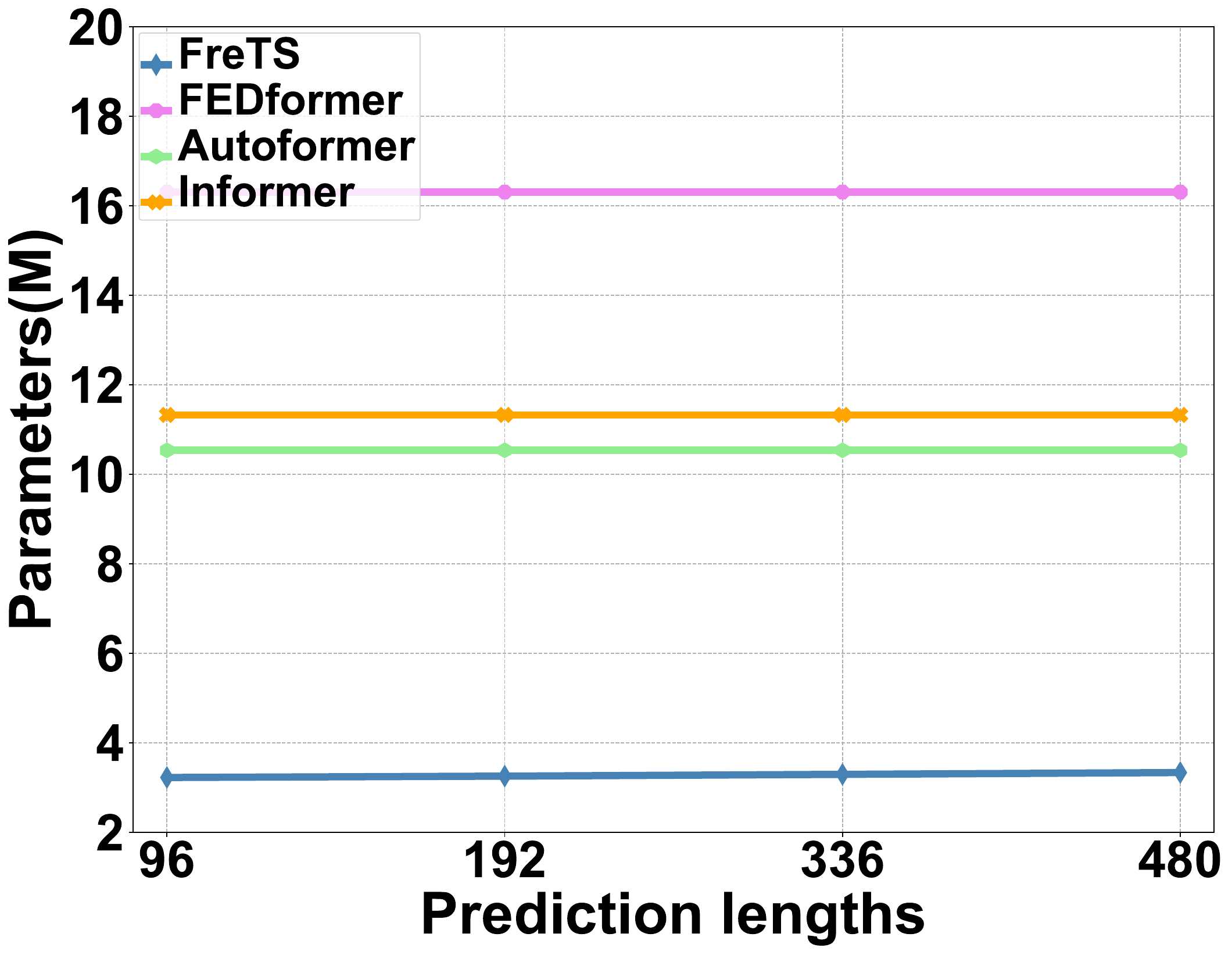}
    \hspace{-1mm}
    \includegraphics[width=0.235\textwidth]{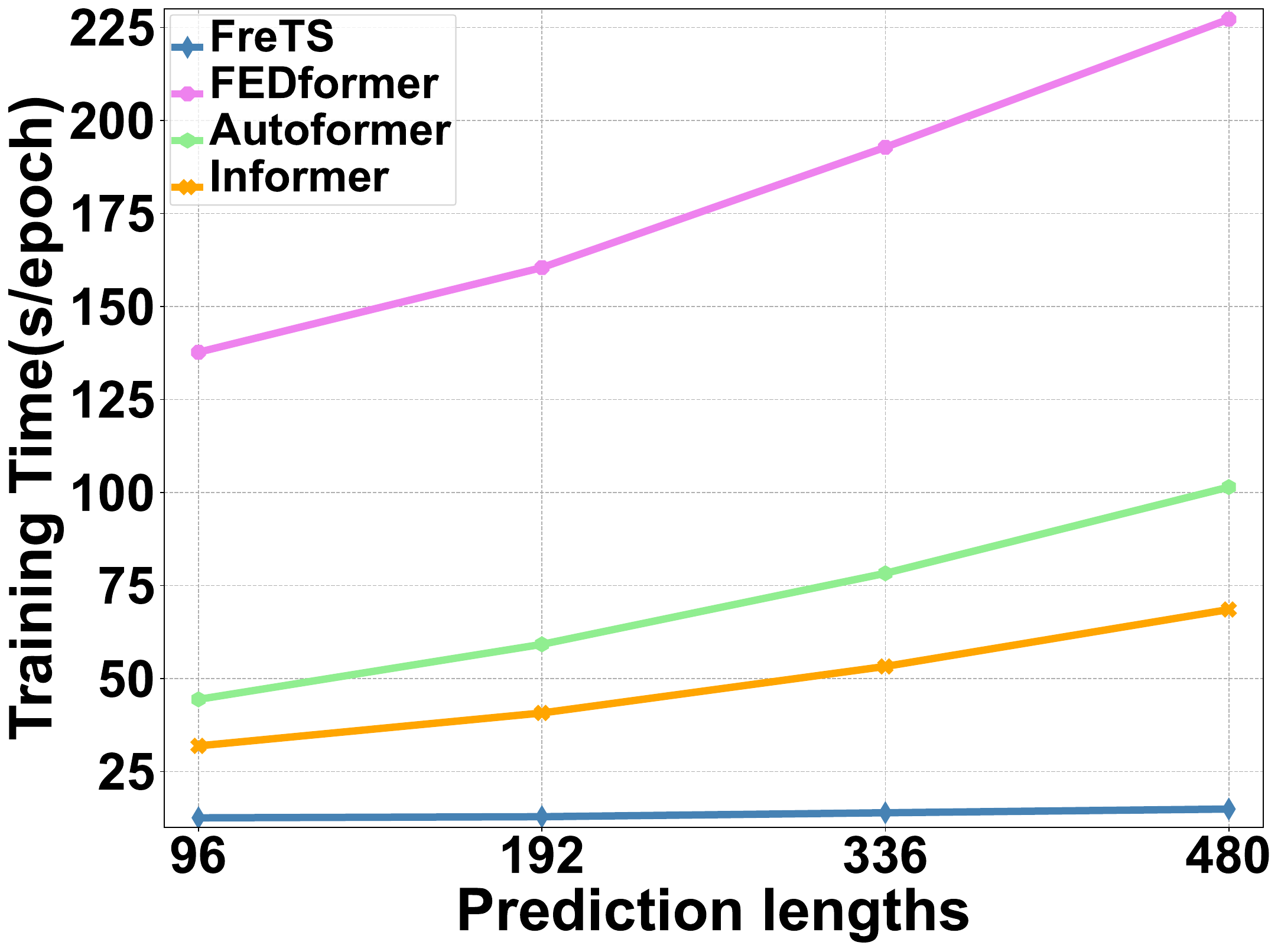}
    \label{fig:exp_efficiency_2}
    }
    \vspace{-1mm}
    \caption{Efficiency analysis (model parameters and training time) on the Wiki and Exchange dataset. (a) The efficiency comparison under different number of variables: the number of variables is enlarged from 1000 to 5000 with the input window size as 12 and the prediction length as 12 on Wiki dataset. (b) The efficiency comparison under the prediction lengths: we conduct experiments with prediction lengths prolonged from 96 to 480 under the same window size of 96 on the Exchange dataset.}
    \label{fig:efficieny_analysis}
    \vspace{-3mm}
\end{figure}

\begin{wrapfigure}{r}{0.55\linewidth}
    \centering
    \vspace{-7mm}%
    \subfigure[The real part $\mathcal{W}_r$]{%
        \includegraphics[width=0.28\textwidth, trim=22 22 22 22, clip]{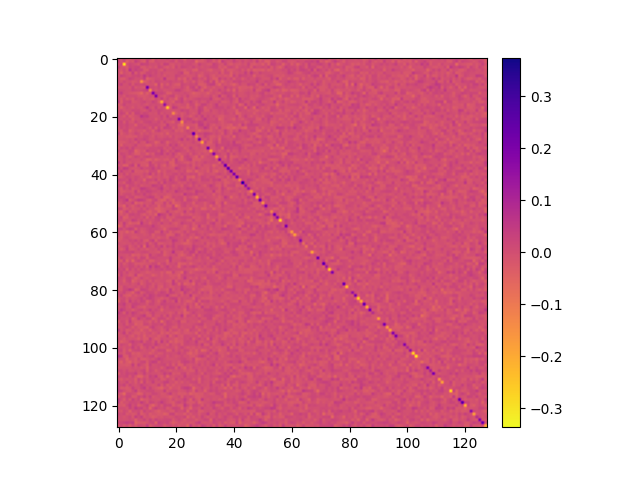}
    }
    \hspace{-5mm}
    \subfigure[The imaginary part $\mathcal{W}_i$]{%
        \includegraphics[width=0.28\textwidth, trim=22 22 22 22,clip]{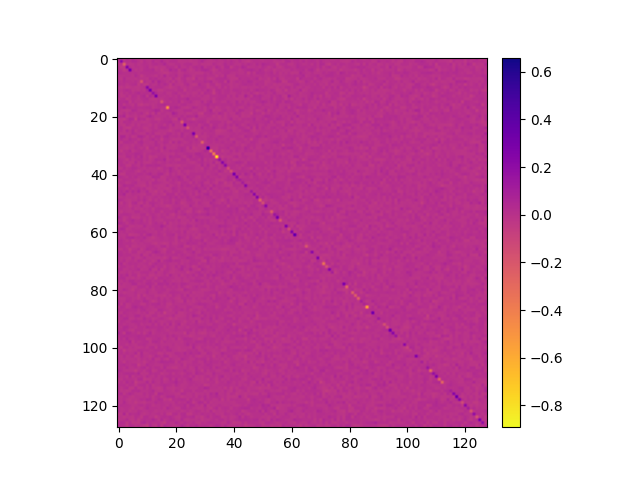}
    }
    \vspace{-3mm}
    \caption{Visualizing learned weights of $\operatorname{FreMLP}$ on the Traffic dataset. $\mathcal{W}_r$ represents the real part of $\mathcal{W}$, and $\mathcal{W}_i$ represents the imaginary part.}
    \label{fig:weight_visualization}
    \vspace{-1mm}
\end{wrapfigure}
\subsection{Visualization Analysis}
In Figure \ref{fig:weight_visualization}, we visualize the learned weights $\mathcal{W}$ in $\operatorname{FreMLP}$ on the Traffic dataset with a lookback window size of 48 and prediction length of 192. As the weights $\mathcal{W}$ are complex numbers, we provide visualizations of the real part $\mathcal{W}_r$ (presented in (a)) and the imaginary part $\mathcal{W}_i$ (presented in (b)) separately. From the figure, we can observe that both the real and imaginary parts play a crucial role in learning process: the weight coefficients of the real or imaginary part exhibit energy aggregation characteristics (clear diagonal patterns) which can facilitate to learn the significant features. In Appendix \ref{appendix_imaginary_part}, we further conduct a detailed analysis on the effects of the real and imaginary parts in different contexts of forecasting, and the effects of the two parts in the FreMLP. We examine their individual contributions and investigate how they influence the final performance. Additional visualizations of the weights on different datasets with various settings, as well as visualizations of global periodic patterns, can be found in Appendix \ref{appendix_weight_visual} and Appendix \ref{appendix_season_visual}, respectively.

\section{Conclusion Remarks}
In this paper, we explore a novel direction and make a new attempt to apply frequency-domain MLPs for time series forecasting. We have redesigned MLPs in the frequency domain that can effectively capture the underlying patterns of time series with global view and energy compaction. We then verify this design by a simple yet effective architecture, FreTS, built upon the frequency-domain MLPs for time series forecasting.
Our comprehensive empirical experiments on seven benchmarks of short-term forecasting and six benchmarks of long-term forecasting have validated the superiority of our proposed methods. Simple MLPs have several advantages and lay the foundation of modern deep learning, which have great potential for satisfied performance with high efficiency. 
We hope this work can facilitate more future research of MLPs on time series modeling.

\begin{ack}
The work was supported in part by the National Key Research and Development Program of China under Grant 2020AAA0104903 and 2019YFB1406300,
and National Natural Science Foundation of China under Grant 62072039 and 62272048.
\end{ack}

\bibliography{ref}
\bibliographystyle{unsrt}

\newpage
\appendix
\section{Notations}
\begin{table}[!h]
    \centering
    \renewcommand\arraystretch{2}
    \caption{Notation.}
    \begin{tabular}{l|l}
    \toprule
    $\mathbf{X}_t$ & \makecell[l]{multivariate time series with a lookback window of $L$ \\at timestamps t, $\mathbf{X}_t \in\mathbb{R}^{N \times L}$}\\
    $X_t$ & the multivariate values of $N$ distinct series at timestamp $t$, $X_t \in \mathbb{R}^{N}$\\
    $\mathbf{Y}_t$ & \makecell[l]{the prediction target with a horizon window of length $\tau$ \\at timestamps $t$, $\mathbf{Y}_t \in \mathbb{R}^{N \times \tau}$}\\
    $\mathbf{H}_t$ & the hidden representation of $\mathbf{X}_t$, $\mathbf{H}_t \in \mathbb{R}^{N \times L \times d}$\\
    $\mathbf{Z}_t$ & the output of the frequency channel learner, $\mathbf{Z}_t \in \mathbb{R}^{N \times L \times d}$\\
    $\mathbf{S}_t$ & the output of the frequency temporal learner, $\mathbf{S}_t \in \mathbb{R}^{N \times L \times d}$ \\
    $\mathcal{H}_{chan}$ & the domain conversion of $\textbf{H}_t$ on channel dimensions, $\mathcal{H}_{chan} \in \mathbb{C}^{N \times L \times d}$\\
    $\mathcal{Z}_{chan}$ & the $\operatorname{FreMLP}$ output of $\mathcal{H}_{chan}$, $\mathcal{Z}_{chan} \in \mathbb{C}^{N \times L \times d}$ \\
    $\mathcal{Z}_{temp}$ & the domain conversion of $\textbf{Z}_t$ on temporal dimensions, $\mathcal{Z}_{temp} \in \mathbb{C}^{N \times L \times d}$\\
    $\mathcal{S}_{temp}$ & the $\operatorname{FreMLP}$ output of $\mathcal{Z}_{temp}$, $\mathcal{S}_{temp} \in \mathbb{C}^{N \times L \times d}$\\
    $\mathcal{W}^{chan}$ & \makecell[l]{the complex number weight matrix of $\operatorname{FreMLP}$ in the frequency \\channel learner, $\mathcal{W}^{chan} \in \mathbb{C}^{d \times d}$}\\
    $\mathcal{B}^{chan}$ & \makecell[l]{the complex number bias of $\operatorname{FreMLP}$ in the frequency channel \\learner, $\mathcal{B}^{chan} \in \mathbb{C}^{d}$}\\
    $\mathcal{W}^{temp}$ & \makecell[l]{the complex number weight matrix of $\operatorname{FreMLP}$ in the frequency\\ temporal learner, $\mathcal{W}^{temp} \in \mathbb{C}^{d \times d}$}\\
    $\mathcal{B}^{temp}$ & \makecell[l]{the complex number bias of $\operatorname{FreMLP}$ in the frequency \\temporal learner, $\mathcal{B}^{temp} \in \mathbb{C}^{d}$}\\
    \bottomrule
    \end{tabular}
    \label{tab:notation}
\end{table}

\section{Experimental Details} \label{experimental_details}
\subsection{Datasets} \label{appendix_datasets}
We adopt thirteen real-world benchmarks in the experiments to evaluate the accuracy of short-term and long-term forecasting. The details of the datasets are as follows:

\noindent\textbf{Solar}\footnote{\url{https://www.nrel.gov/grid/solar-power-data.html}}: It is about the solar power collected by National Renewable Energy Laboratory. We choose the power plant data points in Florida as the data set which contains 593 points. The data is collected from 01/01/2006 to 31/12/2016 with the sampling interval of every 1 hour.

\noindent\textbf{Wiki~\cite{Sen2019}}: It contains a number of daily views of different Wikipedia articles and is collected from 1/7/2015 to 31/12/2016. It consists of approximately $145k$ time series and we randomly choose $5k$ from them as our experimental data set.

\noindent\textbf{Traffic~\cite{Sen2019}}: It contains hourly traffic data from $963$ San Francisco freeway car lanes for short-term forecasting settings while it contains $862$ car lanes for long-term forecasting. It is collected since 01/01/2015 with a sampling interval of every 1 hour.

\noindent\textbf{ECG\footnote{\url{http://www.timeseriesclassification.com/description.php?Dataset=ECG5000}}}: It is about Electrocardiogram(ECG) from the UCR time-series classification archive. It contains 140 nodes and each node has a length of 5000.

\noindent\textbf{Electricity\footnote{\url{https://archive.ics.uci.edu/ml/datasets/ElectricityLoadDiagrams20112014}}}: It contains electricity consumption of 370 clients for short-term forecasting while it contains electricity consumption of 321 clients for long-term forecasting. It is collected since 01/01/2011. The data sampling interval is every 15 minutes.

\noindent\textbf{COVID-19~\cite{tampsgcnets2022}}: It is about COVID-19 hospitalization in the U.S. state of California (CA) from 01/02/2020 to 31/12/2020 provided by the Johns Hopkins University with the sampling interval of every day.

\noindent\textbf{METR-LA\footnote{\url{https://github.com/liyaguang/DCRNN}}}: It contains traffic information collected from loop detectors in the highway of Los Angeles County. It contains 207 sensors which are from 01/03/2012 to 30/06/2012 and the data sampling interval is every 5 minutes.

\noindent\textbf{Exchange\footnote{\url{https://github.com/laiguokun/multivariate-time-series-data}}}: It contains the collection of the daily exchange rates of eight foreign countries including Australia, British, Canada, Switzerland, China, Japan, New Zealand, and Singapore ranging from 1990 to 2016 and the data sampling interval is every 1 day.

\noindent\textbf{Weather\footnote{\url{https://www.bgc-jena.mpg.de/wetter/}}}: It collects 21 meteorological indicators, such as humidity and air temperature, from the Weather Station of the Max Planck Biogeochemistry Institute in Germany in 2020. The data sampling interval is every 10 minutes.

\noindent\textbf{ETT\footnote{\url{https://github.com/zhouhaoyi/ETDataset}}}: It is collected from two different electric transformers labeled with 1 and 2, and each of them contains 2 different resolutions (15 minutes and 1 hour) denoted with m and h. We use ETTh1 and ETTm1 as our long-term forecasting benchmarks.

\subsection{Baselines} \label{appendix_baselines}
We adopt eighteen representative and state-of-the-art baselines for comparison including LSTM-based models, GNN-based models, and Transformer-based models. We introduce these models as follows:

\noindent\textbf{VAR} \cite{Vector1993}: VAR is a classic linear autoregressive model. We use the Statsmodels library (\url{https://www.statsmodels.org}) which is a Python package that provides statistical computations to realize the VAR.

\textbf{DeepGLO} \cite{Sen2019}: DeepGLO models the relationships among variables by matrix factorization and employs a temporal convolution neural network to introduce non-linear relationships. We download the source code from: \url{https://github.com/rajatsen91/deepglo}. We use the recommended configuration as our experimental settings for Wiki, Electricity, and Traffic datasets. For the COVID-19 dataset, the vertical and horizontal batch size is set to 64, the rank of the global model is set to 64, the number of channels is set to [32, 32, 32, 1], and the period is set to 7.

\textbf{LSTNet} \cite{Lai2018}: LSTNet uses a CNN to capture inter-variable relationships and an RNN to discover long-term patterns. We download the source code from: \url{https://github.com/laiguokun/LSTNet}. In our experiment, we use the recommended configuration where the number of CNN hidden units is 100, the kernel size of the CNN layers is 4, the dropout is 0.2, the RNN hidden units is 100, the number of RNN hidden layers is 1, the learning rate is 0.001 and the optimizer is Adam.

\textbf{TCN} \cite{bai2018}: TCN is a causal convolution model for regression prediction. We download the source code from: \url{https://github.com/locuslab/TCN}. We utilize the same configuration as the polyphonic music task exampled in the open source code where the dropout is 0.25, the kernel size is 5, the number of hidden units is 150, the number of levels is 4 and the optimizer is Adam.

\textbf{Informer} \cite{Zhou2021}: Informer leverages an efficient self-attention mechanism to encode the dependencies among variables. We download the source code from: \url{https://github.com/zhouhaoyi/Informer2020}. We use the recommended configuration as the experimental settings where the dropout is 0.05, the number of encoder layers is 2, the number of decoder layers is 1, the learning rate is 0.0001, and the optimizer is Adam.

\textbf{Reformer} \cite{reformer20}: Reformer combines the modeling capacity of a Transformer with an architecture that can be executed
efficiently on long sequences and with small memory use. We download the source code from: \url{https://github.com/thuml/Autoformer}. We use the recommended configuration as the experimental settings.

\textbf{Autoformer} \cite{autoformer21}: Autoformer proposes a decomposition architecture by embedding the series decomposition block as an inner operator, which can progressively aggregate the long-term trend part from intermediate prediction. We download the source code from: \url{https://github.com/thuml/Autoformer}. We use the recommended configuration as the experimental settings.

\textbf{FEDformer} \cite{fedformer22}: FEDformer proposes an attention mechanism with low-rank approximation in frequency and a mixture of expert decomposition to control the distribution shifting. We download the source code from: \url{https://github.com/MAZiqing/FEDformer}. We use FEB-f as the Frequency Enhanced Block and select the random mode with 64 as the experimental mode.

\textbf{SFM} \cite{ZhangAQ17}: On the basis of the LSTM model, SFM introduces a series of different frequency components in the cell states. We download the source code from: \url{https://github.com/z331565360/State-Frequency-Memory-stock-prediction}. We follow the recommended configuration as the experimental settings where the learning rate is 0.01, the frequency dimension is 10, the hidden dimension is 10 and the optimizer is RMSProp. 

\textbf{StemGNN} \cite{Cao2020}: StemGNN leverages GFT and DFT to capture dependencies among variables in the frequency domain. We download the source code from: \url{https://github.com/microsoft/StemGNN}. We use the recommended configuration of stemGNN as our experiment setting where the optimizer is RMSProp, the learning rate is 0.0001, the number of stacked layers is 5, and the dropout rate is 0.5.

\textbf{MTGNN} \cite{wu2020connecting}: MTGNN proposes an effective method to exploit the inherent dependency relationships among multiple time series. We download the source code from: \url{https://github.com/nnzhan/MTGNN}. Because the experimental datasets have no static features, we set the parameter load\_static\_feature to false. We construct the graph by the adaptive adjacency matrix and add the graph convolution layer. Regarding other parameters, we follow the recommended settings.

\textbf{GraphWaveNet} \cite{zonghanwu2019}: GraphWaveNet introduces an adaptive dependency matrix learning to capture the hidden spatial dependency. We download the source code from: \url{https://github.com/nnzhan/Graph-WaveNet}. Since our datasets have no prior defined graph structures, we use only adaptive adjacent matrix. We add a graph convolutional layer and randomly initialize the adjacent matrix. We adopt the recommended setting as its experimental configuration where the learning rate is 0.001, the dropout is 0.3, the number of epochs is 50, and the optimizer is Adam.

\textbf{AGCRN} \cite{Bai2020nips}: AGCRN proposes a data-adaptive graph generation module for discovering spatial correlations from data. We download the source code from: \url{https://github.com/LeiBAI/AGCRN}. We follow the recommended settings where the embedding dimension is 10, the learning rate is 0.003, and the optimizer is Adam.

\textbf{TAMP-S2GCNets} \cite{tampsgcnets2022}: TAMP-S2GCNets explores the utility of MP to enhance knowledge representation mechanisms within the time-aware DL paradigm. We download the source code from: \url{https://www.dropbox.com/sh/n0ajd5l0tdeyb80/AABGn-ejfV1YtRwjf_L0AOsNa?dl=0}. TAMP-S2GCNets require a pre-defined graph topology and we use the California State topology provided by the source code as input. We adopt the recommended settings as the experimental configuration for COVID-19.

\textbf{DCRNN} \cite{LiYS018}: DCRNN uses bidirectional graph random walk to model spatial dependency and recurrent neural network to capture the temporal dynamics. We download the source code from: \url{https://github.com/liyaguang/DCRNN}. We use the recommended configuration as our experimental settings with the batch size is 64, the learning rate is $0.01$, the input dimension is 2 and the optimizer is Adam. DCRNN requires a pre-defined graph structure and we use the adjacency matrix as the pre-defined structure provided by the METR-LA dataset.

\textbf{STGCN} \cite{Yu2018}: STGCN integrates graph convolution and gated temporal convolution through spatial-temporal convolutional blocks. We download the source code from: \url{https://github.com/VeritasYin/STGCN_IJCAI-18}. We follow the recommended settings as our experimental configuration where the batch size is 50, the learning rate is $0.001$ and the optimizer is Adam. STGCN requires a pre-defined graph structure and we leverage the adjacency matrix as the pre-defined structure provided by the METR-LA dataset.
\\
\textbf{LTSF-Linear}~\cite{DLinear_2023}: LTSF-Linear proposes a set of embarrassingly simple one-layer linear models to learn temporal relationships between input and output sequences. We download the source code from: \url{https://github.com/cure-lab/LTSF-Linear}. We use it as our long-term forecasting baseline and follow the recommended settings as experimental configuration.

\textbf{PatchTST}~\cite{PatchTST2023}: PatchTST proposes an effective design of Transformer-based models for time series forecasting
tasks by introducing two key components: patching and channel-independent structure. We download the source code from: \url{https://github.com/PatchTST}. We use it as our long-term forecasting baseline and adhere to the recommended settings as the experimental configuration.

\subsection{Implementation Details} \label{appendix_implement_details}
By default, both the frequency channel and temporal learners contain one layer of $\operatorname{FreMLP}$ with the embedding size $d$ of 128, and the hidden size $d_h$ is set to $256$. For short-term forecasting, the batch size is set to 32 for Solar, METR-LA, ECG, COVID-19, and Electricity datasets. And for Wiki and Traffic datasets, the batch size is set to 4. For the long-term forecasting, except for the lookback window size, we follow most of the experimental settings of LTSF-Linear~\cite{DLinear_2023}. The lookback window size is set to 96 which is recommended by FEDformer~\cite{fedformer22} and Autoformer~\cite{autoformer21}.
In Appendix \ref{additional_long_term}, we also use 192 and 336 as the lookback window size to conduct experiments and the results demonstrate that FreTS outperforms other baselines as well. For the longer prediction lengths (e.g., 336, 720), we use the channel independence strategy and contain only the frequency temporal learner in our model.
For some datasets, we carefully tune the hyperparameters including the batch size and learning rate on the validation set, and we choose the settings with the best performance. We tune the batch size over \{4, 8, 16, 32\}. 

\subsection{Visualization Settings}\label{visualization_settings}
\textbf{The Visualization Method for Global View}. We follow the visualization methods in LTSF-Linear~\cite{DLinear_2023} to visualize the weights learned in the time domain on the input (corresponding to the left side of Figure \ref{fig:motivation}(a)). For the visualization of the weights learned on the frequency spectrum, we first transform the input into the frequency domain and select the real part of the input frequency spectrum to replace the original input. Then, we learn the weights and visualize them in the same manner as in the time domain. The right side of Figure \ref{fig:motivation}(a) shows the weights learned on the Traffic dataset with a lookback window of 96 and a prediction length of 96, Figure \ref{fig:global_traffic} displays the weights learned on the Traffic dataset with a lookback window of 72 and a prediction length of 336, and Figure \ref{fig:global_electricity} is the weights learned on the Electricity dataset with a lookback window of 96 and a prediction length of 96.

\textbf{The Visualization Method for Energy Compaction}. Since the learned weights $\mathcal{W}=\mathcal{W}_r+j\mathcal{W}_i \in \mathbb{C}^{d \times d}$ of the frequency-domain MLPs are complex numbers, we visualize the corresponding real part $\mathcal{W}_r$ and imaginary part $\mathcal{W}_i$, respectively. We normalize them by the calculation of $1/\operatorname{max}(\mathcal{W}) * \mathcal{W}$ and visualize the normalization values. The right side of Figure \ref{fig:motivation}(b) is the real part of $\mathcal{W}$ learned on the Traffic dataset with a lookback window of 48 and a prediction length of 192. To visualize the corresponding weights learned in the time domain, we replace the frequency spectrum of input $\mathcal{Z}_{temp} \in \mathbb{C}^{N \times L \times d}$ with the original time domain input $\mathbf{H}_t \in \mathbb{R}^{N\times L \times d}$ and perform calculations in the time domain with a weight $W \in \mathbb{R}^{d\times d}$, as depicted in the left side of Figure \ref{fig:motivation}(b).

\subsection{Ablation Experimental Settings} \label{ablation-settings}
DLinear decomposes a raw data input into a trend component and a seasonal component, and two one-layer linear layers are applied to each component. In the ablation study part, we replace the two linear layers with two different frequency-domain MLPs (corresponding to DLinear (FreMLP) in Table \ref{tab:ablation_fremlp}), and compare their accuracy using the same experimental settings recommended in LTSF-Linear~\cite{DLinear_2023}.
NLinear subtracts the input by the last value of the sequence. Then, the input goes through a linear layer, and the subtracted part is added back before making the final prediction. We replace the linear layer with a frequency-domain MLP (corresponding to NLinear (FreMLP) in Table \ref{tab:ablation_fremlp}), and compare their accuracy using the same experimental settings recommended in LTSF-Linear~\cite{DLinear_2023}.

\section{Complex Multiplication}\label{complex_multiplication}
For two complex number values $\mathcal{Z}_1=(a+jb)$ and $\mathcal{Z}_2=(c+jd)$, where $a$ and $c$ is the real part of $\mathcal{Z}_1$ and $\mathcal{Z}_2$ respectively, $b$ and $d$ is the imaginary part of $\mathcal{Z}_1$ and $\mathcal{Z}_2$ respectively. Then the multiplication of $\mathcal{Z}_1$ and $\mathcal{Z}_2$ is calculated by:
\begin{equation}
   \mathcal{Z}_1\mathcal{Z}_2=(a+jb)(c+jd)=ac+j^2bd+jad+jbc=(ac-bd)+j(ad+bc)
\end{equation} where $j^2=-1$.

\section{Proof}
\subsection{Proof of Theorem 1} \label{energy_compaction_proof}
\begin{appendixTheo}
Suppose that $\mathbf{H}$ is the representation of raw time series and $\mathcal{H}$ is the corresponding frequency components of the spectrum, then the energy of a time series in the time domain is equal to the energy of its representation in the frequency domain. Formally, we can express this with above notations by:
\begin{equation}
    \int_{-\infty }^{\infty} |\mathbf{H}(v)|^2\mathrm{d}v= \int_{-\infty }^{\infty} |\mathcal{H}(f)|^2 \mathrm{d}f
\end{equation} 
where $\mathcal{H}(f)=\int_{-\infty }^{\infty}\mathbf{H}(v) e^{-j2\pi fv} \mathrm{d}v $, $v$ is the time/channel dimension, $f$ is the frequency dimension. 
\end{appendixTheo}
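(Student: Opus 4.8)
The statement is the classical Parseval--Plancherel identity for the continuous Fourier transform, and the plan is to prove it by a direct computation starting from the right-hand side. First I would expand $|\mathcal{H}(f)|^2 = \mathcal{H}(f)\,\overline{\mathcal{H}(f)}$ and substitute the defining integral for $\mathcal{H}(f)$ together with the conjugated integral for $\overline{\mathcal{H}(f)}$, using a fresh dummy variable $u$ in the latter, so that
\[
\int_{-\infty}^{\infty}|\mathcal{H}(f)|^2\,\mathrm{d}f
= \int_{-\infty}^{\infty}\!\int_{-\infty}^{\infty}\!\int_{-\infty}^{\infty}
\mathbf{H}(v)\,\overline{\mathbf{H}(u)}\; e^{-j2\pi f v}\, e^{+j2\pi f u}\,\mathrm{d}v\,\mathrm{d}u\,\mathrm{d}f .
\]
Next I would interchange the order of integration (Fubini) to perform the $f$-integral first, recognizing $\int_{-\infty}^{\infty} e^{-j2\pi f(v-u)}\,\mathrm{d}f = \delta(v-u)$, the Dirac delta. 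Inserting this and carrying out the $u$-integral collapses $\overline{\mathbf{H}(u)}$ onto $\overline{\mathbf{H}(v)}$, leaving $\int_{-\infty}^{\infty}\mathbf{H}(v)\,\overline{\mathbf{H}(v)}\,\mathrm{d}v = \int_{-\infty}^{\infty}|\mathbf{H}(v)|^2\,\mathrm{d}v$, which is exactly the claim. An equivalent route is to invoke the general Parseval relation $\int f\bar g = \int \hat f\,\overline{\hat g}$ and specialize to $g=f$; the derivation above is essentially the proof of that relation in this special case.

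The main obstacle is the analytic rigor of these manipulations: the interchange of the three integrals requires an integrability hypothesis (e.g. $\mathbf{H}\in L^1\cap L^2$), and the appearance of the Dirac delta is only formal. A fully rigorous argument would establish the identity on a dense class (Schwartz functions, or $L^1\cap L^2$) and extend it to all of $L^2$ by continuity/density, or alternatively regularize the divergent $f$-integral with a damping factor $e^{-\varepsilon f^2}$ and pass to the limit $\varepsilon\to 0^+$ under a dominated-convergence argument. For the purposes of this paper the formal delta-function computation suffices.

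Finally I would remark that in practice $\mathbf{H}$ is a finite-length discrete representation, so the relevant statement is the finite Parseval identity for the DFT, namely $\sum_{v}|\mathbf{H}(v)|^2 = \tfrac{1}{M}\sum_{f}|\mathcal{H}(f)|^2$ (with $M$ the length, up to the chosen DFT normalization convention). This version follows immediately from the fact that the appropriately scaled DFT matrix is unitary, and requires no convergence hypotheses; it is the form actually used to justify the energy-compaction claim, since it says discarding low-energy frequency components perturbs the signal energy by a controlled amount.
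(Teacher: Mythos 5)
Your argument is correct at the level of rigor the paper itself adopts, but it takes a genuinely different route from the paper's proof. You start from the frequency-domain side, expand $|\mathcal{H}(f)|^2=\mathcal{H}(f)\overline{\mathcal{H}(f)}$ into a triple integral over two time variables and $f$, integrate over $f$ first, and invoke the formal identity $\int_{-\infty}^{\infty} e^{-j2\pi f(v-u)}\,\mathrm{d}f=\delta(v-u)$ to collapse one of the time integrals. The paper instead starts from the time-domain side: it writes $|\mathbf{H}(v)|^2=\mathbf{H}(v)\mathbf{H}^{*}(v)$, substitutes the conjugated inverse transform $\mathbf{H}^{*}(v)=\int_{-\infty}^{\infty}\mathcal{H}^{*}(f)e^{-j2\pi fv}\,\mathrm{d}f$, and then a single Fubini interchange makes the inner $v$-integral exactly the forward transform $\mathcal{H}(f)$, giving $\int_{-\infty}^{\infty}\mathcal{H}^{*}(f)\mathcal{H}(f)\,\mathrm{d}f$ with no Dirac delta anywhere. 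The paper's route is shorter and stays within ordinary integrals (its only inputs are the Fourier inversion formula and one interchange of integration order), whereas yours avoids assuming the inversion theorem and instead leans on the orthogonality of complex exponentials, at the cost of the distributional step you rightly flag as only formal; your suggestions for making it rigorous (density in $L^1\cap L^2$ or a damping factor) are sound, and your closing observation that the implemented finite-length setting really rests on the discrete Parseval identity for the unitary DFT is a worthwhile point that the paper does not make explicit.
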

\begin{proof}
Given the representation of raw time series $\mathbf{H}\in \mathbb{R}^{N\times L \times d}$, 
let us consider performing integration in either the $N$ dimension (channel dimension) or the $L$ dimension (temporal dimension), denoted as the integral over $v$, then
\begin{align*}
     \int_{-\infty }^{\infty} |\mathbf{H}(v)|^2\mathrm{d}v &= \int_{-\infty }^{\infty} \mathbf{H}(v) \mathbf{H}^{*}(v) \mathrm{d}v \\
     \intertext{where $\mathbf{H}^{*}(v)$ is the conjugate of $\mathbf{H}(v)$. According to IDFT, $\mathbf{H}^{*}(v)=\int_{-\infty }^{\infty} \mathcal{H}^{*}(f) e^{-j2\pi fv} \mathrm{d}f$, we can obtain}
     \int_{-\infty }^{\infty} |\mathbf{H}(v)|^2\mathrm{d}v&= \int_{-\infty }^{\infty} \mathbf{H}(v) [\int_{-\infty }^{\infty} \mathcal{H}^{*}(f) e^{-j2\pi fv} \mathrm{d}f] \mathrm{d}v \\
     &= \int_{-\infty }^{\infty} \mathcal{H}^{*}(f) [\int_{-\infty }^{\infty} \mathbf{H}(v) e^{-j2\pi fv}\mathrm{d}v] \mathrm{d}f \\
     &= \int_{-\infty }^{\infty} \mathcal{H}^{*}(f) \mathcal{H}(f) \mathrm{d}f\\
     &= \int_{-\infty }^{\infty} |\mathcal{H}(f)|^2 \mathrm{d}f
\end{align*}
Proved.
\end{proof}
Therefore, the energy of a time series in the time domain is equal to the energy of its representation in the frequency domain.

\subsection{Proof of Theorem 2} \label{fremlp_analysis}
\begin{appendixTheo}
Given the time series input $\mathbf{H}$ and its corresponding frequency domain conversion $\mathcal{H}$, the operations of frequency-domain MLP on $\mathcal{H}$ can be represented as global convolutions on $\mathbf{H}$ in the time domain. This can be given by:
\begin{equation}
    \mathcal{H}\mathcal{W}+\mathcal{B}= \mathcal{F}(\mathbf{H}\ast W+B)
\end{equation}
where $\ast$ is a circular convolution, $\mathcal{W}$ and $\mathcal{B}$ are the complex number weight and bias, $W$ and $B$ are the weight and bias in the time domain, and $\mathcal{F}$ is $\operatorname{DFT}$.
\end{appendixTheo}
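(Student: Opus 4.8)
At its core the claimed identity is the classical convolution theorem for the DFT together with linearity of the transform, so the plan is to separate those two ingredients and then reconcile the statement with the tensor shapes used by $\operatorname{FreMLP}$ in Section~\ref{sec:fremlp}. First I would dispose of the bias. Since the DFT is an invertible linear map on $\mathbb{C}^{n}$ (here $n$ is the length of the transformed axis, i.e.\ $N$ for the frequency channel learner and $L$ for the frequency temporal learner), define the time-domain bias by $B:=\mathcal{F}^{-1}(\mathcal{B})$, so that $\mathcal{F}(\mathbf{H}\ast W+B)=\mathcal{F}(\mathbf{H}\ast W)+\mathcal{F}(B)=\mathcal{F}(\mathbf{H}\ast W)+\mathcal{B}$. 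It then remains to show $\mathcal{H}\mathcal{W}=\mathcal{F}(\mathbf{H}\ast W)$ under the identification $W:=\mathcal{F}^{-1}(\mathcal{W})$, i.e.\ we regard the complex weight $\mathcal{W}$ as the frequency response of the time-domain kernel $W$.

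The main computation is the convolution theorem itself, which I would prove by a direct index manipulation rather than cite. Writing the length-$n$ circular convolution as $(\mathbf{H}\ast W)[k]=\sum_{m=0}^{n-1}\mathbf{H}[m]\,W[(k-m)\bmod n]$ and applying the DFT,
\[
\mathcal{F}(\mathbf{H}\ast W)[p]=\sum_{k=0}^{n-1}\Big(\sum_{m=0}^{n-1}\mathbf{H}[m]\,W[(k-m)\bmod n]\Big)e^{-j2\pi pk/n},
\]
I would swap the two summations and substitute $l=(k-m)\bmod n$; by $n$-periodicity of $W$ the index $l$ again runs over $\{0,\dots,n-1\}$, and the factorization $e^{-j2\pi pk/n}=e^{-j2\pi pm/n}e^{-j2\pi pl/n}$ splits the double sum into $\big(\sum_{m}\mathbf{H}[m]e^{-j2\pi pm/n}\big)\big(\sum_{l}W[l]e^{-j2\pi pl/n}\big)=\mathcal{H}[p]\,\mathcal{W}[p]$, which is exactly the componentwise multiplication realised by the $\mathcal{H}\mathcal{W}$ step of $\operatorname{FreMLP}$. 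Combined with the bias step this gives $\mathcal{H}\mathcal{W}+\mathcal{B}=\mathcal{F}(\mathbf{H}\ast W+B)$.

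The part that needs the most care — and what I expect to be the main obstacle — is matching this per-component identity to the actual operation in Section~\ref{sec:fremlp}, where $\operatorname{DomainConversion}$ is taken along the channel (or time) axis while the complex weight $\mathcal{W}\in\mathbb{C}^{d\times d}$ acts on the embedding axis. The clean fix is to run the argument above independently at each frequency index of the transformed axis, so that $\mathcal{W}$ is transported back to the time domain as the \emph{same} linear map applied at every lag of the convolution; equivalently, restricting to a single embedding coordinate, $\mathcal{W}$ literally plays the role of the spectrum of the global (full-length) filter $W$, and ``global convolution'' means a full-width circular convolution. I would also state explicitly that the relevant transform here is the finite circular DFT (which is why the time-domain counterpart is a \emph{circular} convolution), not the continuous integral transform displayed in Eq.~\eqref{fft}, and that invertibility of the DFT is precisely what makes the time-domain $W$ and $B$ well defined and unique given $\mathcal{W}$ and $\mathcal{B}$. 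Apart from the index substitution in the convolution theorem, every step is linearity or bookkeeping.
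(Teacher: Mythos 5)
Your proposal is correct and rests on the same key step as the paper's own proof in Appendix \ref{fremlp_analysis}: establish $\mathcal{F}(\mathbf{H}\ast W)=\mathcal{H}\mathcal{W}$ by expanding the convolution, swapping the order of summation/integration, changing variables, and factoring the exponential kernel, then handling the bias by linearity. The difference is one of setting rather than of idea. The paper carries out the manipulation with continuous Fourier integrals over $v\in(-\infty,\infty)$, which strictly speaking proves the theorem for ordinary (linear) convolution of integrable signals and never explicitly treats the bias term or says how $W$ and $B$ are obtained from $\mathcal{W}$ and $\mathcal{B}$; you instead work with the finite circular DFT, which matches the statement's wording ($\operatorname{DFT}$, circular convolution) exactly and makes $W=\mathcal{F}^{-1}(\mathcal{W})$ and $B=\mathcal{F}^{-1}(\mathcal{B})$ well defined via invertibility of the transform. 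You also flag a bookkeeping point the paper leaves implicit: the convolution theorem yields a per-frequency scalar multiplication, whereas $\operatorname{FreMLP}$ applies a $d\times d$ matrix along the embedding axis, so the identity must be read per frequency index (equivalently, coordinatewise in the embedding) with the same map at every lag. In short, your route buys rigor and fidelity to the discrete statement at the cost of a little extra index work; the paper's continuous-integral version buys brevity but papers over the circular/discrete distinction and the bias.
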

\begin{proof}
Suppose that we conduct operations in the $N$ (i.e., channel dimension) or $L$ (i.e., temporal dimension) dimension, then
\begin{align*}
    \mathcal{F}(\mathbf{H}(v) \ast W(v)) &= \int_{-\infty }^{\infty} (\mathbf{H}(v)\ast W(v)) e^{-j2\pi fv}\mathrm{d}v 
    \intertext{According to convolution theorem, $\mathbf{H}(v)\ast W(v)=\int_{-\infty }^{\infty}(\mathbf{H}(\tau)W(v-\tau))\mathrm{d}\tau$, then}
                                  \mathcal{F}(\mathbf{H}(v) \ast W(v)) &= \int_{-\infty }^{\infty} \int_{-\infty }^{\infty} (\mathbf{H}(\tau)W(v-\tau)) e^{-j2\pi fv} \mathrm{d}\tau \mathrm{d}v \\
                                  &= \int_{-\infty }^{\infty} \int_{-\infty }^{\infty} W(v-\tau) e^{-j2\pi fv}\mathrm{d}v \mathbf{H}(\tau)\mathrm{d}\tau 
\intertext{Let $x = v-\tau$, then} 
        \mathcal{F}(\mathbf{H}(v) \ast W(v)) &= \int_{-\infty }^{\infty} \int_{-\infty }^{\infty} W(x) e^{-j2\pi f(x+\tau)} \mathrm{d}x \mathbf{H}(\tau)\mathrm{d}\tau\\
        &= \int_{-\infty }^{\infty} \int_{-\infty }^{\infty} W(x) e^{-j2\pi fx} e^{-j2\pi f\tau} \mathrm{d}x \mathbf{H}(\tau)\mathrm{d}\tau\\
        &= \int_{-\infty }^{\infty} \mathbf{H}(\tau)e^{-j2\pi f \tau} \mathrm{d}\tau \int_{-\infty }^{\infty} W(x) e^{-j2\pi fx} \mathrm{d}x \\
        &= \mathcal{H}(f)\mathcal{W}(f)
\end{align*}
Accordingly, $(\mathbf{H}(v)\ast W(v))$ in the time domain is equal to $(\mathcal{H}(f)\mathcal{W}(f))$ in the frequency domain.
Therefore, the operations of $\operatorname{FreMLP}$ ($\mathcal{H}\mathcal{W}+\mathcal{B}$) in the channel (i.e., $v=N$) or temporal dimension (i.e., $v=L$), are equal to the operations $(\mathbf{H} \ast W+B)$ in the time domain. This implies that frequency-domain MLPs can be viewed as global convolutions in the time domain. Proved.
\end{proof}

\section{Further Analysis}

\subsection{Ablation Study} \label{appendix_ablation_study}
In this section, we further analyze the effects of the frequency channel and temporal learners with different prediction lengths on ETTm1 and ETTh1 datasets. The results are shown in Table \ref{tab:appendix_fct}. It demonstrates that with the prediction length increasing, the frequency temporal learner shows more effective than the channel learner. Especially, when the prediction length is longer (e.g., 336, 720), the channel learner will lead to worse performance. The reason is that when the prediction lengths become longer, the model with the channel learner is likely to overfit data during training. Thus for long-term forecasting with longer prediction lengths, the channel independence strategy may be more effective, as described in PatchTST~\cite{PatchTST2023}.
\begin{table}[!h]
    \centering
    \caption{Ablation studies of the frequency channel and temporal learners in long-term forecasting. 'I/O' indicates lookback window sizes/prediction lengths.}
    \scalebox{0.62}{
    \begin{tabular}{ c | c c c c c c c c | c c c c c c c c}
    \toprule
    Dataset & \multicolumn{8}{c|}{ETTm1} & \multicolumn{8}{c}{ETTh1} \\
    \cmidrule(r){1-1} \cmidrule(r){2-9}  \cmidrule(r){10-17} 
    I/O &  \multicolumn{2}{c}{96/96} &  \multicolumn{2}{c}{96/192} &  \multicolumn{2}{c}{96/336} &  \multicolumn{2}{c|}{96/720} &  \multicolumn{2}{c}{96/96} &  \multicolumn{2}{c}{96/192} &  \multicolumn{2}{c}{96/336} &  \multicolumn{2}{c}{96/720}\\
    \cmidrule(r){1-1} \cmidrule(r){2-3}  \cmidrule(r){4-5} \cmidrule(r){6-7} \cmidrule(r){8-9} \cmidrule(r){10-11} \cmidrule(r){12-13} \cmidrule(r){14-15} \cmidrule(r){16-17}
     Metrics & MAE & RMSE & MAE & RMSE & MAE & RMSE & MAE & RMSE & MAE & RMSE & MAE & RMSE & MAE & RMSE & MAE & RMSE\\
        \midrule
        FreCL &0.053 & 0.078 &0.059 & 0.085 & 0.067 & 0.095 & 0.097 & 0.125 &0.063 & 0.089 &0.067 & 0.093 & 0.071 & 0.097 & 0.087 & 0.115\\
         FreTL &0.053 & 0.078 &0.058 & 0.084 & \textbf{0.062} & \textbf{0.089} & \textbf{0.069} & \textbf{0.096} &\textbf{0.061} & \textbf{0.087} &\textbf{0.065} & \textbf{0.091}& \textbf{0.070} & \textbf{0.096} & \textbf{0.082} & \textbf{0.108}\\
        \midrule
       FreTS & \textbf{0.052} & \textbf{0.077} &\textbf{0.057} & \textbf{0.083} & {0.064} & {0.092} & {0.071} & {0.099}&0.063 & 0.089 &{0.066} & {0.092} & 0.072 & 0.098 & 0.086 & 0.113 \\
    \bottomrule
    \end{tabular}
    }
    \label{tab:appendix_fct}
\end{table} 

\subsection{Impacts of Real/Imaginary Parts} \label{appendix_imaginary_part}
To investigate the effects of real and imaginary parts, we conduct experiments on Exchange and ETTh1 datasets under different prediction lengths $L\in \{96, 192\}$ with the lookback window of $96$. Furthermore, we analyze the effects of $\mathcal{W}_r$ and $\mathcal{W}_i$ in the weights $\mathcal{W}=\mathcal{W}_r+j\mathcal{W}_i$ of $\operatorname{FreMLP}$. In this experiment, we only use the frequency temporal learner in our model.
The results are shown in Table \ref{tab:real_imag_appendix}. In the table, Input$_{real}$ indicates that we only feed the real part of the input into the network, and Input$_{imag}$ indicates that we only feed the imaginary part of the input into the network. $\mathcal{W} (\mathcal{W}_r)$ denotes that we set $\mathcal{W}_i$ to 0 and $\mathcal{W} (\mathcal{W}_i)$ denotes that we set $\mathcal{W}_r$ to 0. From the table, we can observe that both the real part and imaginary part of input are indispensable and the real part is more important to the imaginary part, and the real part of $\mathcal{W}$ plays a more significant role for the model performances.
\begin{table}[!h]\small
    \centering
    \caption{Investigation the impacts of real/imaginary parts}
    \begin{tabular}{c|c c c c|c c c c}
    \toprule
       Dataset  &  \multicolumn{4}{c|}{Exchange} & \multicolumn{4}{c}{ETTh1} \\
       \cmidrule(r){1-1} \cmidrule(r){2-5}  \cmidrule(r){6-9} 
       I/O  & \multicolumn{2}{c}{96/96} & \multicolumn{2}{c|}{96/192} & \multicolumn{2}{c}{96/96} & \multicolumn{2}{c}{96/192}\\
       \cmidrule(r){2-3} \cmidrule(r){4-5}  \cmidrule(r){6-7} \cmidrule(r){8-9} 
       Metrics & MAE & RMSE & MAE & RMSE & MAE & RMSE & MAE & RMSE\\
       \midrule
       Input$_{real}$ & 0.048 & 0.062 & 0.058 & 0.074 & 0.080 & 0.111 & 0.083 & 0.113\\
       Input$_{imag}$ & 0.143 & 0.185 & 0.143 & 0.184 & 0.130 & 0.156 & 0.130 & 0.156\\
       $\mathcal{W} (\mathcal{W}_r)$ & 0.039 & 0.053 & 0.051 & 0.067 & 0.063 & 0.089 & 0.067 & 0.093\\
       $\mathcal{W} (\mathcal{W}_i)$ & 0.143 & 0.184 & 0.142 & 0.184 & 0.116 & 0.138 & 0.117 & 0.139\\
       \midrule
       FreTS & 0.037 & 0.051 & 0.050 & 0.067 & 0.061 & 0.087 & 0.065 & 0.091\\
       \bottomrule
    \end{tabular}
    \label{tab:real_imag_appendix}
\end{table}

\subsection{Parameter Sensitivity}
We further perform extensive experiments on the ECG dataset to evaluate the sensitivity of the input length $L$ and the embedding dimension size $d$. (1) \textit{Input length}: We tune over the input length with the value $\{6, 12, 18, 24, 30, 36, 42, 50, 60\}$ on the ECG dataset and the prediction length is 12, and the result is shown in Figure \ref{fig:parameter_sensitivity}(a). From the figure, we can find that with the input length increasing, the performance first becomes better because the long input length may contain more pattern information, and then it decreases due to data redundancy or overfitting. (2) \textit{Embedding size}: We choose the embedding size over the set $\{32, 64, 128, 256, 512\}$ on the ECG dataset. The results are shown in Figure \ref{fig:parameter_sensitivity}(b). It shows that the performance first increases and then decreases with the increase of the embedding size because a large embedding size improves the fitting ability of our FreTS but may easily lead to overfitting especially when the embedding size is too large.

\begin{figure}[ht]
    \centering
    \subfigure[Input window length]
    {
        \centering
        \includegraphics[width=0.35\linewidth]{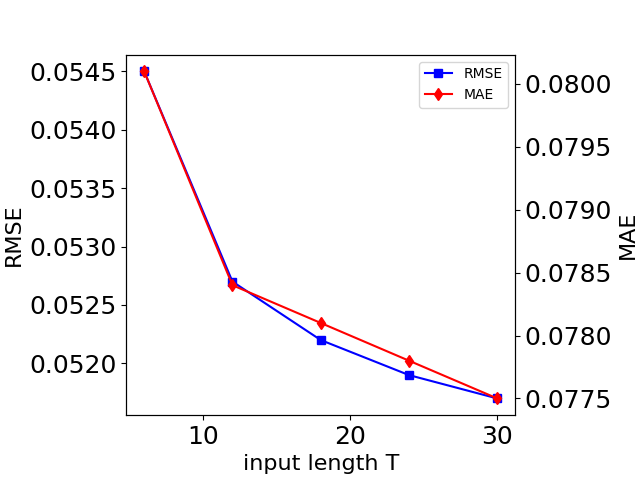}
        \label{input_length}
    }
    \quad\quad
    \subfigure[Embedding size]
    {
        \centering
        \includegraphics[width=0.35\linewidth]{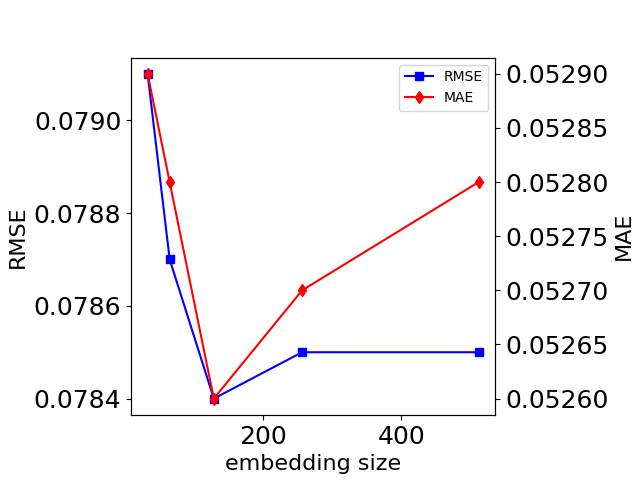}
        \label{embedding_size}
    }
    \caption{The parameter sensitivity analyses of FreTS.}
    \label{fig:parameter_sensitivity}
\end{figure}

\section{Additional Results} \label{additional_results}
\subsection{Multi-Step Forecasting}\label{additional_multi_step}
To further evaluate the performance of our FreTS in multi-step forecasting, we conduct more experiments on METR-LA and COVID-19 datasets with the input length of 12 and the prediction lengths of \{3, 6, 9, 12\}, and the results are shown in Tables \ref{tab:metr_result} and \ref{tab:covid_result}, respectively. In this experiment, we only select the state-of-the-art (i.e., GNN-based and Transformer-based) models as the baselines since they perform better than other models, such as RNN and TCN. Among these baselines, STGCN, DCRNN, and TAMP-S2GCNets require pre-defined graph structures. The results demonstrate that FreTS outperforms other baselines, including those models with pre-defined graph structures, at all steps. This further confirms that FreTS has strong capabilities in capturing channel-wise and time-wise dependencies.
\begin{table*}[!h]\small
    \centering
    \caption{Multi-step short-term forecasting results comparison on the METR-LA dataset with the input length of 12 and the prediction length of $\tau \in \{3, 6, 9, 12\}$. We highlight the best results in \textbf{bold} and  the second best results are \underline{underline}.}
    {
    \begin{tabular}{l|c c |c c |c c |c c }
    \toprule
       Length & \multicolumn{2}{c|}{3} &  \multicolumn{2}{c|}{6} & \multicolumn{2}{c|}{9}  & \multicolumn{2}{c}{12}\\
       Metrics & MAE &RMSE&  MAE& RMSE &MAE &RMSE & MAE& RMSE \\
       \midrule
         Reformer & 0.086 & 0.154 &0.097 & 0.176 &0.107 &0.193 & 0.118 & 0.206  \\
         Informer & 0.082 &0.156 & 0.094 &0.176 & 0.108 &0.193 & 0.125 & 0.214 \\
         Autoformer & 0.087 & 0.149 & 0.091 & 0.162 & 0.106 & 0.178 & 0.099 & 0.184 \\
         FEDformer &0.064 & 0.127 &0.073 & 0.145 & \underline{0.079} &\underline{0.160} & \underline{0.086} & 0.175 \\
         DCRNN & 0.160 & 0.204 & 0.191 & 0.243 & 0.216 & 0.269 & 0.241 & 0.291\\
         STGCN & 0.058 & 0.133 &0.080 &0.177 &0.102 & 0.209 &0.128 &0.238  \\
         GraphWaveNet & 0.180 &0.366 & 0.184 &0.375 & 0.196 & 0.382 & 0.202 &0.386 \\
         MTGNN & 0.135 & 0.294 & 0.144 &0.307 & 0.149 &0.328 & 0.153 & 0.316 \\
         StemGNN & \underline{0.052} & \underline{0.115}  & \underline{0.069} & \underline{0.141} & 0.080 & 0.162 & {0.093} & \underline{0.175} \\
         AGCRN & 0.062 & 0.131  & 0.086 & 0.165 & 0.099 & 0.188 & 0.109 & 0.204\\
         \midrule
         \textbf{FreTS} & \textbf{0.050} &\textbf{0.113} & \textbf{0.066} &\textbf{0.140} & \textbf{0.076} &\textbf{0.158} & \textbf{0.080} &\textbf{0.166}\\
         \bottomrule
    \end{tabular}}
    \label{tab:metr_result}
\end{table*}

\begin{table}[!h]\small
    \centering
    \caption{Multi-step short-term forecasting results comparison on the COVID-19 dataset with the input length of 12 and the prediction length of $\tau \in \{3, 6, 9, 12\}$. We highlight the best results in \textbf{bold} and the second best results are \underline{underline}.}
    {
    \begin{tabular}{l|c c|c c|c c|c c}
    \toprule
       Length & \multicolumn{2}{c|}{3} &  \multicolumn{2}{c|}{6} & \multicolumn{2}{c|}{9}  & \multicolumn{2}{c}{12}\\
       Metrics & MAE &RMSE&  MAE& RMSE& MAE &RMSE & MAE& RMSE\\
       \midrule
         Reformer &0.212 & 0.282& 0.139& 0.186& \underline{0.148}& \underline{0.197}&\underline{0.152} & \underline{0.209} \\
         Informer & 0.234& 0.312&0.190 & 0.245&0.184 & 0.242& 0.200 & 0.259 \\
         Autoformer &0.212 & 0.280&0.144 &0.191&0.152 & 0.201&0.159 & 0.211\\
         FEDformer &0.246&0.328&0.169&0.242&0.175&0.247&0.160&0.219 \\
         GraphWaveNet & \underline{0.092} & \underline{0.129} & \underline{0.133} & \underline{0.179} & 0.171 &0.225 & 0.201 &0.255\\
         StemGNN  & 0.247 &0.318& 0.344 &0.429& 0.359 &0.442 & 0.421 &0.508\\
         AGCRN  & 0.130 &0.172 & 0.171 &0.218 & 0.224 &0.277 & 0.254 & 0.309 \\
         MTGNN & 0.276 &0.379 & 0.446 &0.513 & 0.484 &0.548 & 0.394 &0.488 \\
         TAMP-S2GCNets  & 0.140 & 0.190 & 0.150 & 0.200 & 0.170 & 0.230 & {0.180} & {0.230} \\
         \midrule
         \textbf{FreTS} & \textbf{0.071}  &\textbf{0.103} & \textbf{0.093} &  \textbf{0.131} & \textbf{0.109}  &\textbf{0.148} & \textbf{0.124} &\textbf{0.164}\\
         \bottomrule
    \end{tabular}}
    \label{tab:covid_result}
\end{table}

\subsection{Long-Term Forecasting under Varying Lookback Window} \label{additional_long_term}
In Table \ref{tab:long_term_appendix}, we present the long-term forecasting results of our FreTS and other baselines (PatchTST~\cite{PatchTST2023}, LTSF-linear~\cite{DLinear_2023}, FEDformer~\cite{fedformer22}, Autoformer~\cite{autoformer21}, Informer~\cite{Zhou2021}, and Reformer~\cite{reformer20}) under different lookback window lengths $L \in \{96, 192, 336\}$ on the Exchange dataset. The prediction lengths are $\{96, 192, 336, 720\}$. From the table, we can observe that our FreTS outperforms all baselines in all settings and achieves significant improvements than FEDformer~\cite{fedformer22}, Autoformer~\cite{autoformer21}, Informer~\cite{Zhou2021}, and Reformer~\cite{reformer20}. 
It verifies the effectiveness of our FreTS in learning informative representation under different lookback window.

\begin{table*}[ht]
    \centering
    \caption{Long-term forecasting results comparison with different lookback window lengths $L \in \{96, 192, 336\}$. The prediction lengths are as $\tau \in \{96, 192, 336, 720\}$.
    The best results are in \textbf{bold} and the second best results are \underline{underlined}.}
    \scalebox{0.7}{
    \begin{tabular}{l c|c c|c c|c c| c c|c c |c c |c c}
    \toprule
       \multicolumn{2}{c|}{Models}  &\multicolumn{2}{c|}{FreTS} & \multicolumn{2}{c|}{PatchTST} & \multicolumn{2}{c|}{LTSF-Linear} & \multicolumn{2}{c|}{FEDformer} & \multicolumn{2}{c|}{Autoformer} & \multicolumn{2}{c|}{Informer} & \multicolumn{2}{c}{Reformer}\\
        \multicolumn{2}{c|}{Metrics}&MAE &RMSE &MAE &RMSE&MAE &RMSE&MAE &RMSE &MAE &RMSE &MAE &RMSE &MAE &RMSE\\
       \midrule
         \multirow{4}*{\rotatebox{90}{96}} & 96 &\textbf{0.037} & \textbf{0.051}&0.039& \underline{0.052} &\underline{0.038} & \underline{0.052}& 0.050& 0.067& 0.050& 0.066& 0.066& 0.084& 0.126&0.146 \\
         & 192 & \textbf{0.050}& \textbf{0.067}&0.055& 0.074 & \underline{0.053}& \underline{0.069}& 0.064& 0.082& 0.063& 0.083& 0.068& 0.088& 0.147& 0.169\\
         & 336 & \textbf{0.062}& \textbf{0.082}&0.071& 0.093 & \underline{0.064}& \underline{0.085}& 0.080& 0.105& 0.075& 0.101& 0.093& 0.127& 0.157& 0.189\\
         & 720 & \textbf{0.088}& \textbf{0.110}&0.132& 0.166 & \underline{0.092}& \underline{0.116}& 0.151& 0.183& 0.150& 0.181& 0.117& 0.170& 0.166& 0.201\\
         \midrule
         \multirow{4}*{\rotatebox{90}{192}} &  96 & \textbf{0.036}& \textbf{0.050}& \underline{0.037} & \underline{0.051} &0.038 & \underline{0.051}& 0.067& 0.086& 0.066& 0.085& 0.109& 0.131& 0.123 & 0.143 \\
         & 192 & \textbf{0.051}& \textbf{0.068}& \underline{0.052}& \underline{0.070} &0.053 & \underline{0.070}& 0.080& 0.101& 0.080& 0.102& 0.144& 0.172&0.139& 0.161\\
          &  336& \textbf{0.066}& \textbf{0.087}& \underline{0.072} & {0.097} & {0.073}& \underline{0.096} & 0.093& 0.122& 0.099& 0.129& 0.141& 0.177& 0.155& 0.181\\
         & 720 &\textbf{0.088} & \textbf{0.110}& {0.099} & {0.128} & \underline{0.098}& \underline{0.122}& 0.190& 0.222& 0.191& 0.224& 0.173&0.210 & 0.159& 0.193\\
         \midrule
         \multirow{4}*{\rotatebox{90}{336}} &  96 & \textbf{0.038}& \textbf{0.052}& \underline{0.039}& \underline{0.053}& 0.040& 0.055& 0.088&0.113 & 0.088& 0.110& 0.137 & 0.169& 0.128& 0.148\\
         & 192 & \textbf{0.053}& \textbf{0.070}& \underline{0.055}& \underline{0.071}& \underline{0.055}& 0.072& 0.103& 0.133& 0.104& 0.133& 0.161& 0.195& 0.138& 0.159\\
          &  336& \textbf{0.071}& \textbf{0.092}& \underline{0.074}& \underline{0.099}& {0.077}& {0.100}& 0.123&0.155 & 0.127& 0.159& 0.156& 0.193& 0.156& 0.179\\
         & 720 &\textbf{0.082} & \textbf{0.108}& 0.100& {0.129}& \underline{0.087}& \underline{0.110}&0.210 & {0.242}& 0.211& 0.244& 0.173& 0.210&0.168 &0.205 \\
         \bottomrule
    \end{tabular}
    }
    \label{tab:long_term_appendix}
\end{table*}

\section{Visualizations} \label{appendix_visualization}
\subsection{Weight Visualizations for Energy Compaction}\label{appendix_weight_visual}
We further visualize the weights $\mathcal{W}=\mathcal{W}_r+j\mathcal{W}_i$ in the frequency temporal learner under different settings, including different lookback window sizes and prediction lengths, on the Traffic and Electricity datasets. The results are illustrated in Figures \ref{fig:weight_visualization_traffic} and \ref{fig:weight_visualization_electricity}. These figures demonstrate that the weight coefficients of the real or imaginary part exhibit energy aggregation characteristics (clear diagonal patterns) which can facilitate frequency-domain MLPs in learning the significant features.
\begin{figure*}[!h]
    \centering
    \subfigure[$\mathcal{W}_r$ under I/O=48/192]
    {
        \centering
        \includegraphics[width=0.31\linewidth,trim=40 10 40 0,clip]{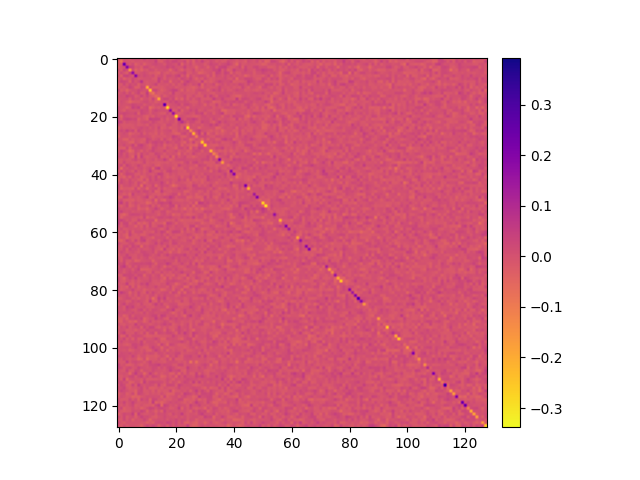}
    }
    \subfigure[$\mathcal{W}_r$ under I/O=48/336]
    {
        \centering
        \includegraphics[width=0.31\linewidth,trim=40 10 40 0,clip]{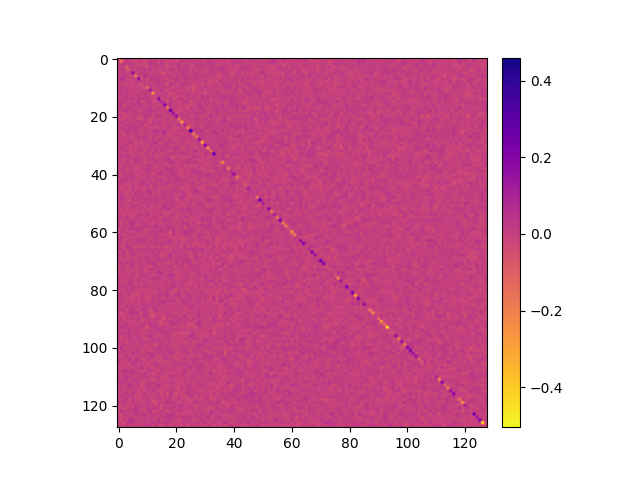}
    }
    \subfigure[$\mathcal{W}_r$ under I/O=72/336]
    {
        \centering
        \includegraphics[width=0.31\linewidth,trim=40 10 40 0,clip]{figures/MLP_3_REAL_72_336.png}
    }
    \subfigure[$\mathcal{W}_i$ under I/O=48/192]
    {
        \centering
        \includegraphics[width=0.31\linewidth,trim=40 0 40 20,clip]{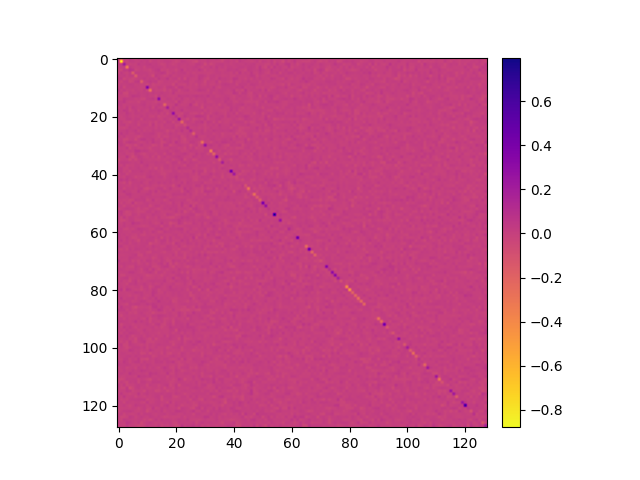}
    }
    \subfigure[$\mathcal{W}_i$ under I/O=48/336]
    {
        \centering
        \includegraphics[width=0.31\linewidth,trim=40 0 40 20,clip]{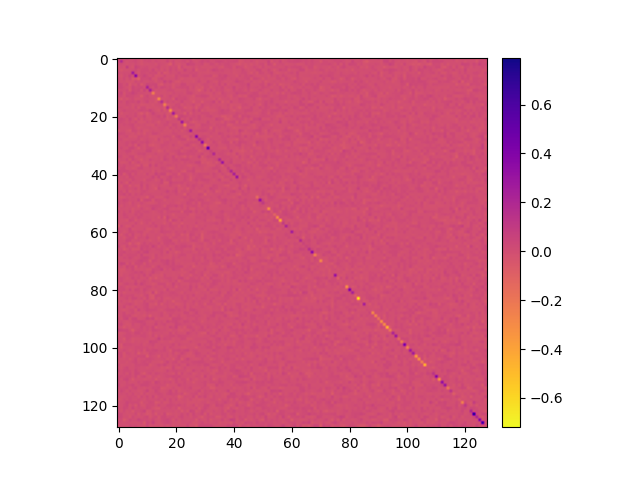}
    }
    \subfigure[$\mathcal{W}_i$ under I/O=72/336]
    {
        \centering
        \includegraphics[width=0.31\linewidth,trim=40 0 40 20,clip]{figures/MLP_3_IMAG_72_336.png}
    }
    \caption{The visualizations of the weights $\mathcal{W}$ in the frequency temporal learner on the Traffic dataset. 'I/O' denotes lookback window sizes/prediction lengths. $\mathcal{W}_r$ and $\mathcal{W}_i$ are the real and imaginary parts of $\mathcal{W}$, respectively. }
    \label{fig:weight_visualization_traffic}
\end{figure*}

\begin{figure*}[!ht]
    \centering
    \subfigure[$\mathcal{W}_r$ under I/O=96/96]
    {
        \centering
        \includegraphics[width=0.31\linewidth,trim=40 0 40 30,clip]{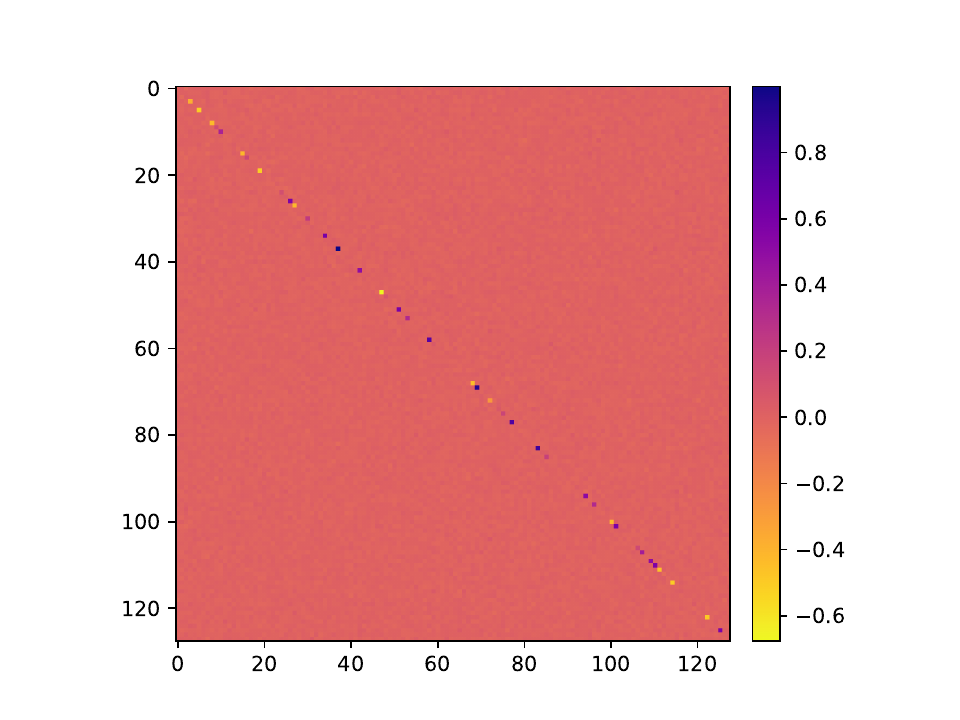}
    }
    \subfigure[$\mathcal{W}_r$ under I/O=96/336]
    {
        \centering
        \includegraphics[width=0.31\linewidth,trim=40 0 40 30,clip]{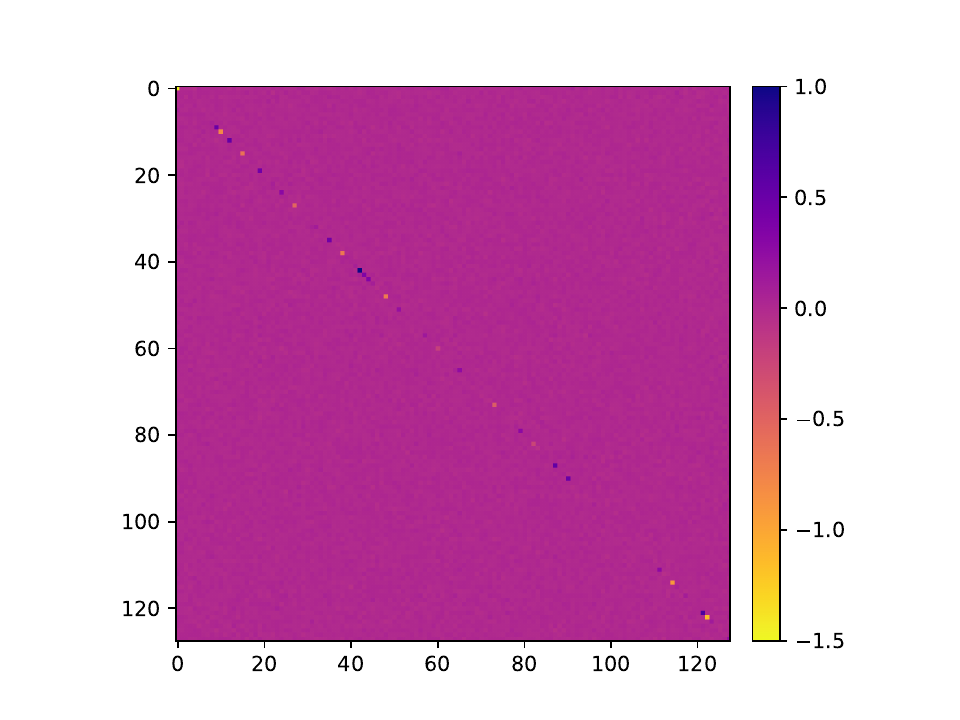}
    }
    \subfigure[$\mathcal{W}_r$ under I/O=96/720]
    {
        \centering
        \includegraphics[width=0.31\linewidth,trim=40 0 40 30,clip]{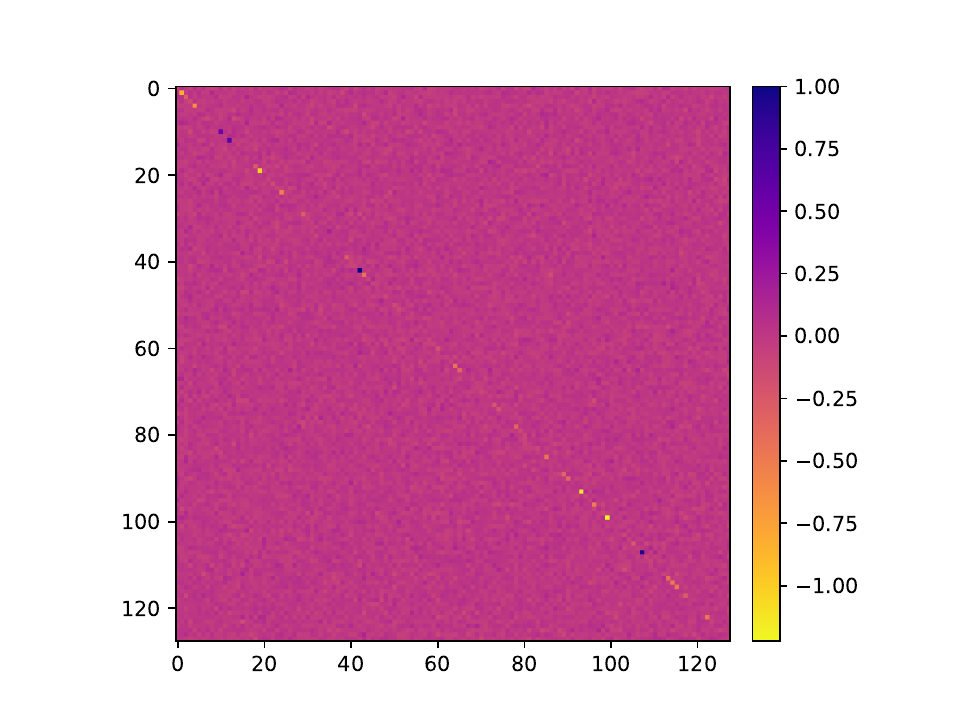}
    }
    \subfigure[$\mathcal{W}_i$ under I/O=96/96]
    {
        \centering
        \includegraphics[width=0.31\linewidth,trim=40 0 40 20,clip]{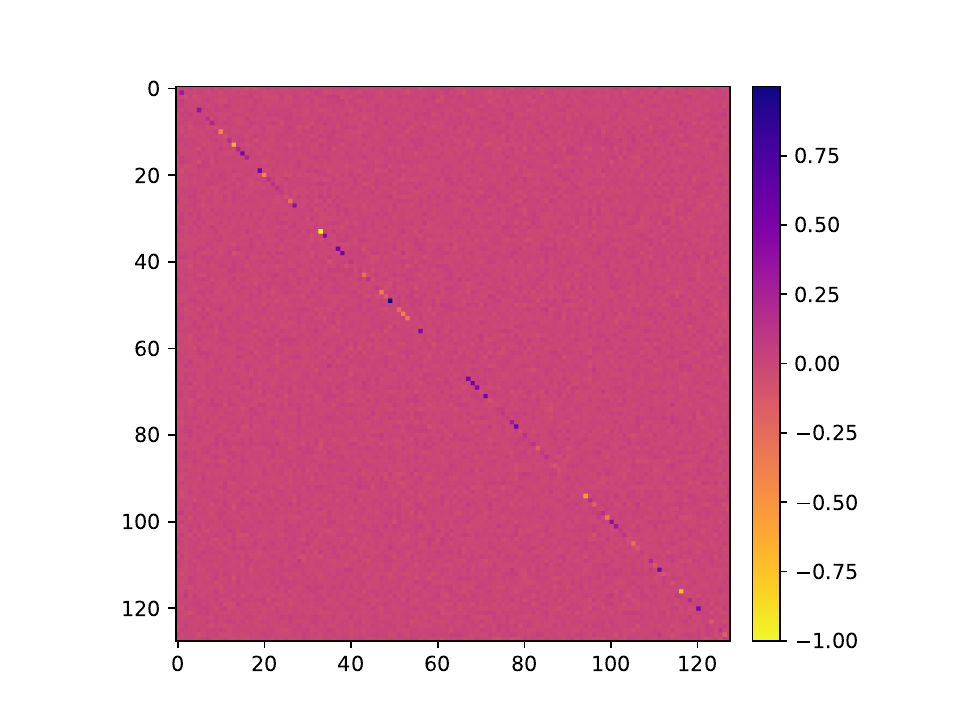}
    }
    \subfigure[$\mathcal{W}_i$ under I/O=96/336]
    {
        \centering
        \includegraphics[width=0.31\linewidth,trim=40 0 40 20,clip]{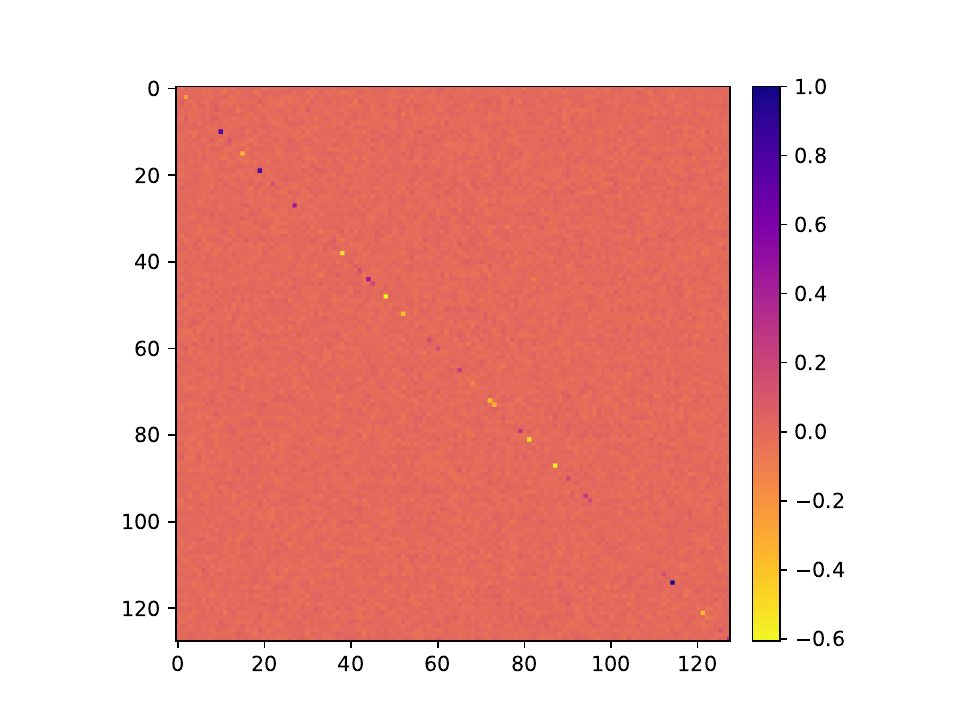}
    }
    \subfigure[$\mathcal{W}_i$ under I/O=96/720]
    {
        \centering
        \includegraphics[width=0.31\linewidth,trim=40 0 40 20,clip]{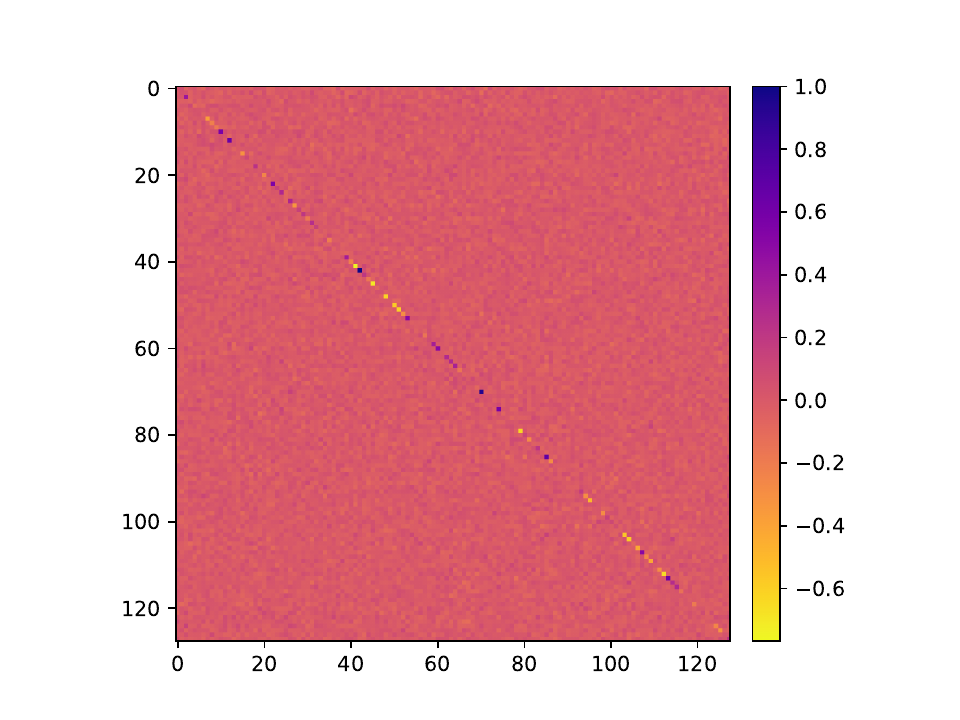}
    }
    \caption{The visualizations of the weights $\mathcal{W}$ in the frequency temporal learner on the Electricity dataset. 'I/O' denotes lookback window sizes/prediction lengths. $\mathcal{W}_r$ and $\mathcal{W}_i$ are the real and imaginary parts of $\mathcal{W}$, respectively.}
    \label{fig:weight_visualization_electricity}
\end{figure*}

\subsection{Weight Visualizations for Global View} \label{appendix_season_visual}
To verify the characteristics of a global view of learning in the frequency domain, we perform additional experiments on the Traffic and Electricity datasets and compare the weights learned on the input in the time domain with those learned on the input frequency spectrum. The results are presented in Figures \ref{fig:global_traffic} and \ref{fig:global_electricity}. The left side of the figures displays the weights learned on the input in the time domain, while the right side shows those learned on the real part of the input frequency spectrum. From the figures, we can observe that the patterns learned on the input frequency spectrum exhibit more obvious periodic patterns compared to the time domain. This is attributed to the global view characteristics of the frequency domain. Furthermore, we visualize the predictions of FreTS on the Traffic and Electricity datasets, as depicted in Figures \ref{fig:value_traffic} and \ref{fig:value_electricity}, which show that FreTS exhibit a good ability to fit cyclic patterns.
In summary, these results demonstrate that FreTS has a strong capability to capture the global periodic patterns, which benefits from the global view characteristics of the frequency domain.

\begin{figure*}[!ht]
    \centering
    \subfigure[Learned on the input]
    {
        \centering
        \includegraphics[width=0.48\linewidth,trim=30 10 30 30,clip]{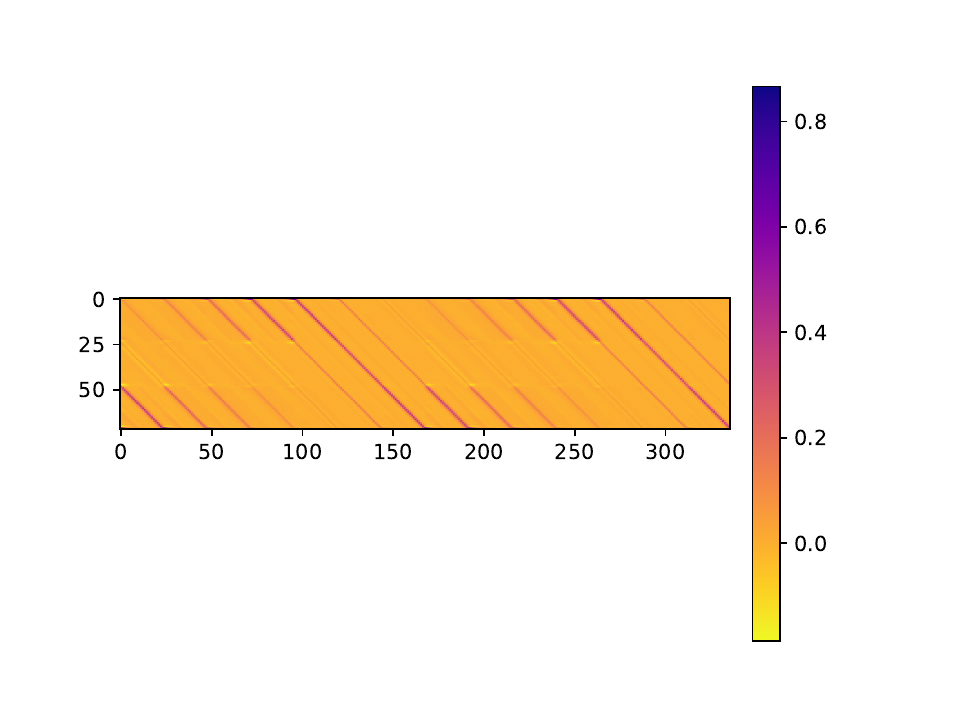}
    }
    \subfigure[Learned on the frequency spectrum]
    {
        \centering
        \includegraphics[width=0.48\linewidth,trim=30 10 30 30,clip]{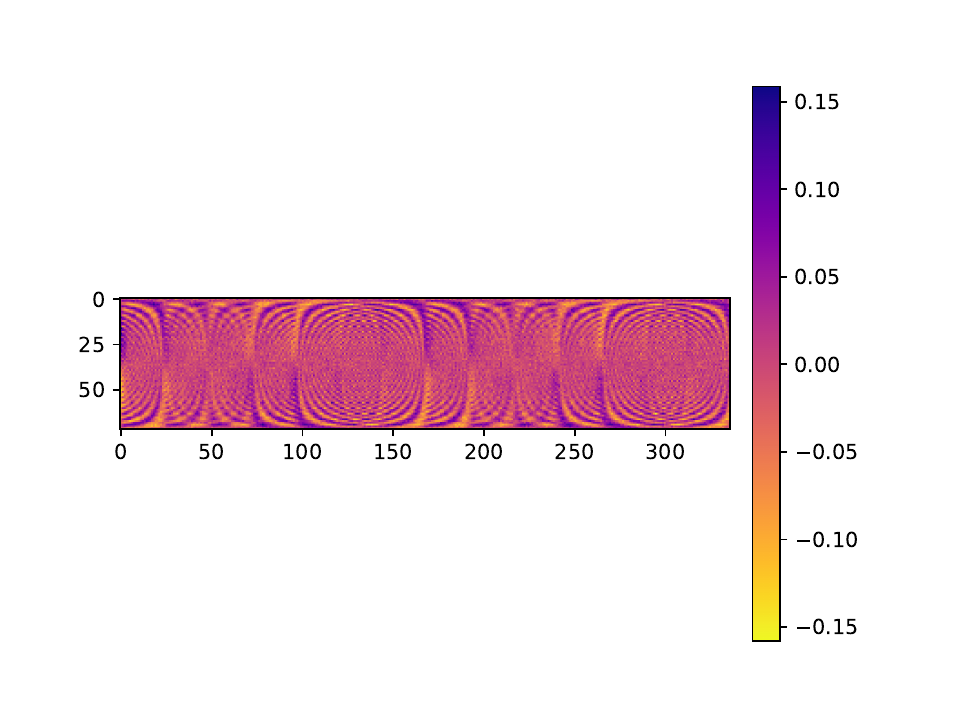}
    }
    \caption{Visualization of the weights ($L\times \tau$) on the Traffic dataset with lookback window size of 72 and prediction length of 336.}
    \label{fig:global_traffic}
\end{figure*}

\begin{figure*}[!h]
    \centering
    \subfigure[Learned on the input]
    {
        \centering
        \includegraphics[width=0.48\linewidth,trim=30 10 30 30,clip]{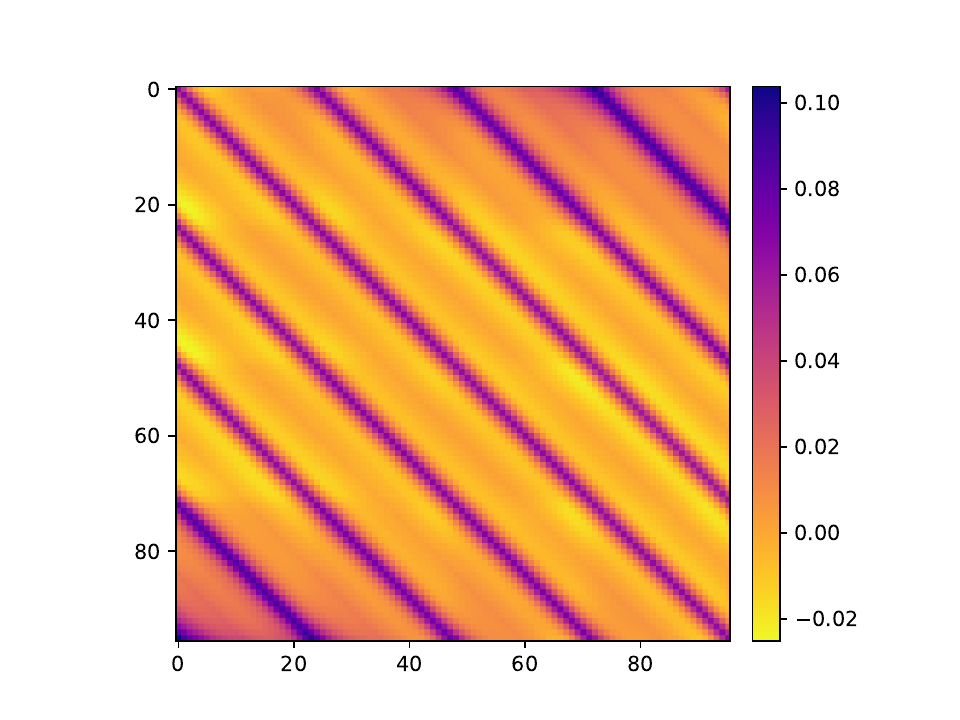}
    }
    \subfigure[Learned on the frequency spectrum]
    {
        \centering
        \includegraphics[width=0.48\linewidth,trim=30 10 30 30,clip]{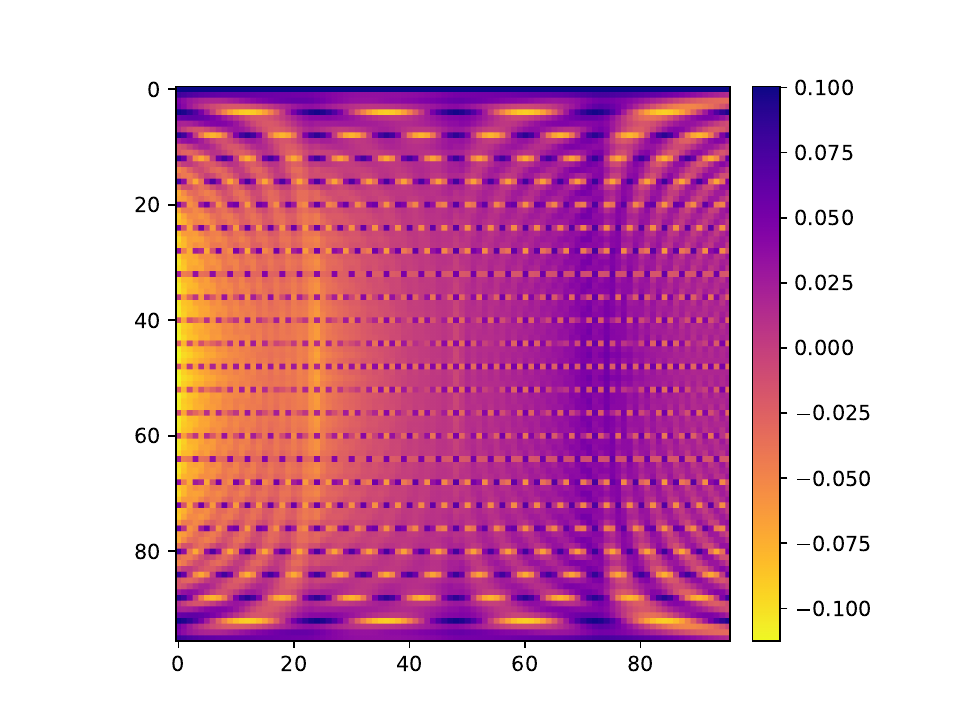}
    }
    \caption{Visualization of the weights ($L\times \tau$) on the Electricity dataset with lookback window size of 96 and prediction length of 96.}
    \label{fig:global_electricity}
\end{figure*}

\begin{figure*}[!h]
    \centering
    \subfigure[I/O=48/48]
    {
        \centering
        \includegraphics[width=0.4\linewidth]{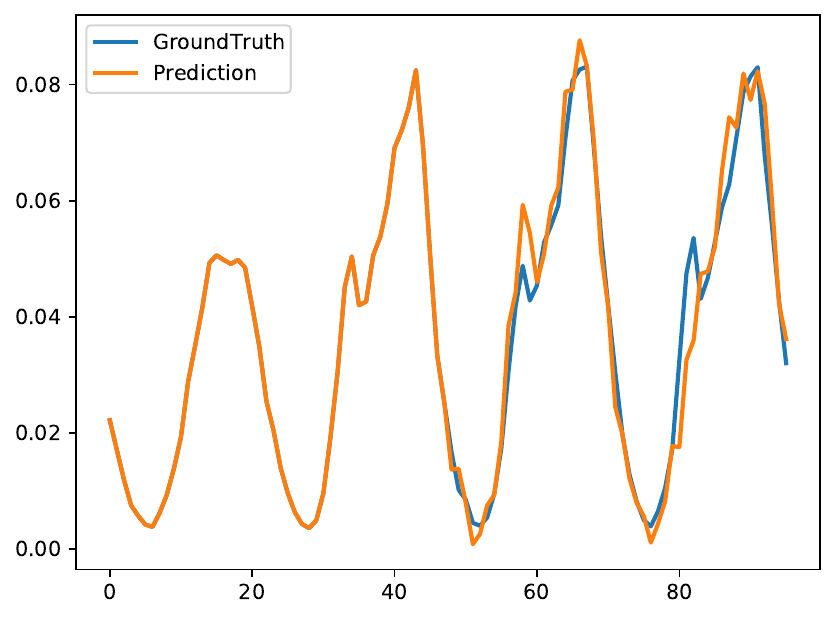}
    }
    \quad
    \subfigure[I/O=48/96]
    {
        \centering
        \includegraphics[width=0.4\linewidth]{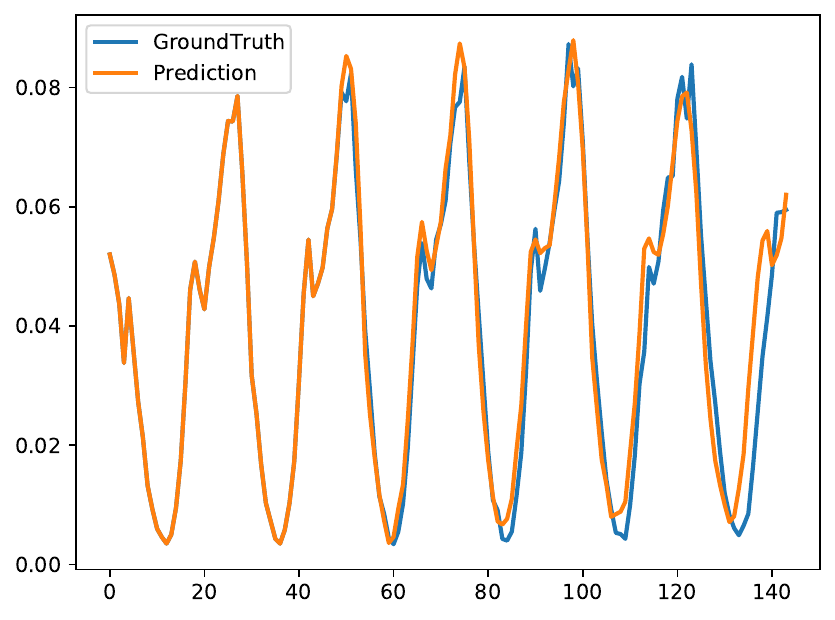}
    }
    
    \subfigure[I/O=48/192]
    {
        \centering
        \includegraphics[width=0.4\linewidth]{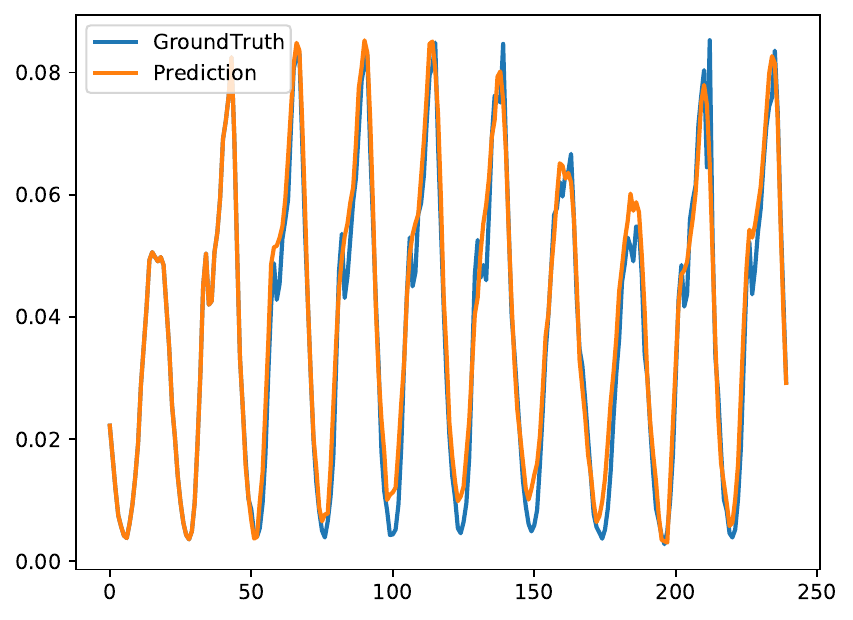}
        \label{48-192}
    }
    \quad
    \subfigure[I/O=48/336]
    {
        \centering
        \includegraphics[width=0.4\linewidth]{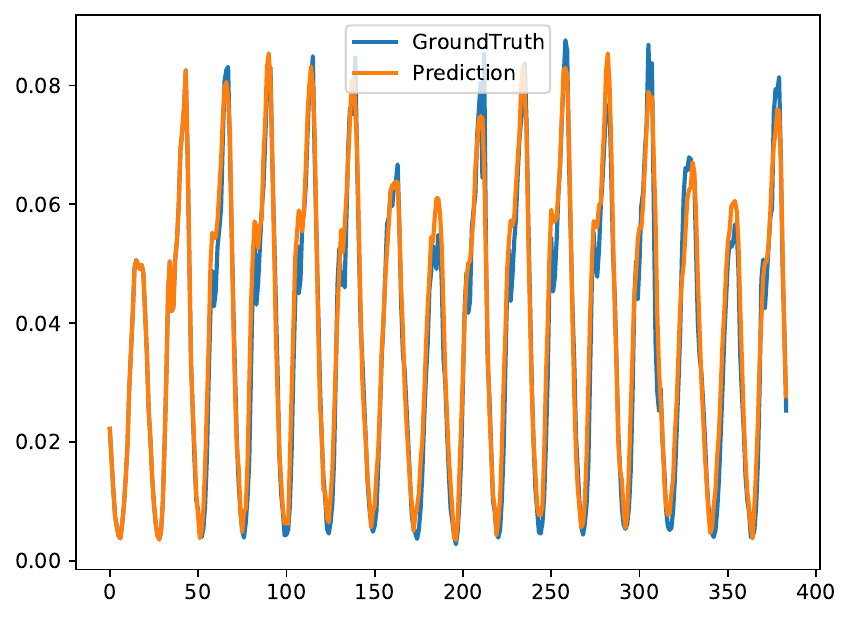}
        \label{48-336}
    }
    \caption{Visualizations of predictions (forecast vs. actual) on the Traffic dataset. 'I/O' denotes lookback window sizes/prediction lengths.}
    \label{fig:value_traffic}
\end{figure*}

\begin{figure*}[!h]
    \centering
    \subfigure[I/O=96/96]
    {
        \centering
        \includegraphics[width=0.4\linewidth]{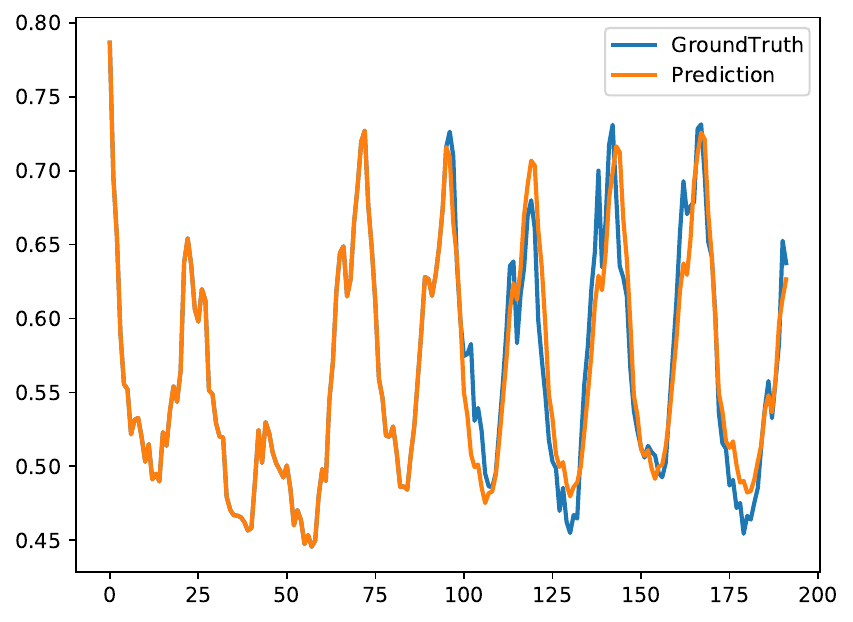}
        \label{96-96}
    }
    \quad
    \subfigure[I/O=96/192]
    {
        \centering
        \includegraphics[width=0.4\linewidth]{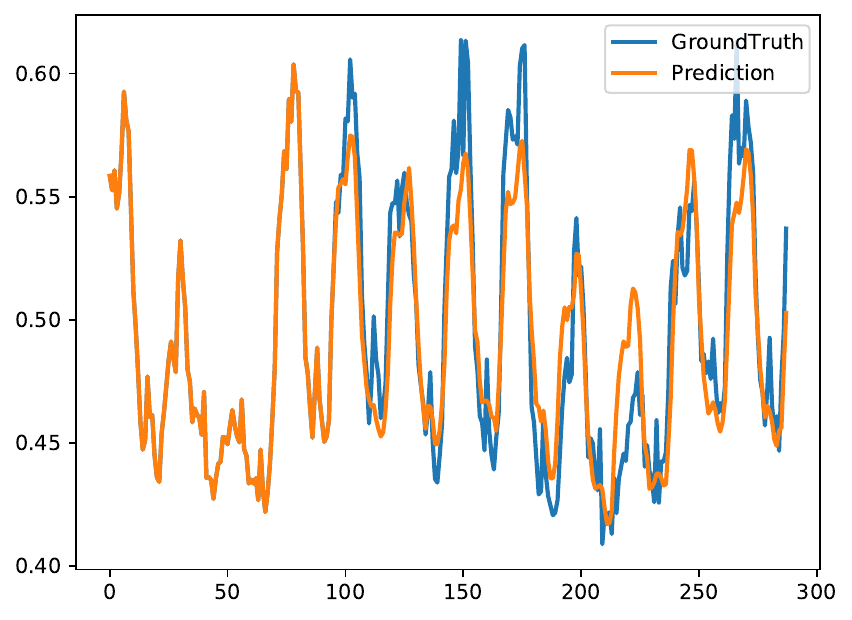}
        \label{96-192}
    }
    
    \subfigure[I/O=96/336]
    {
        \centering
        \includegraphics[width=0.4\linewidth]{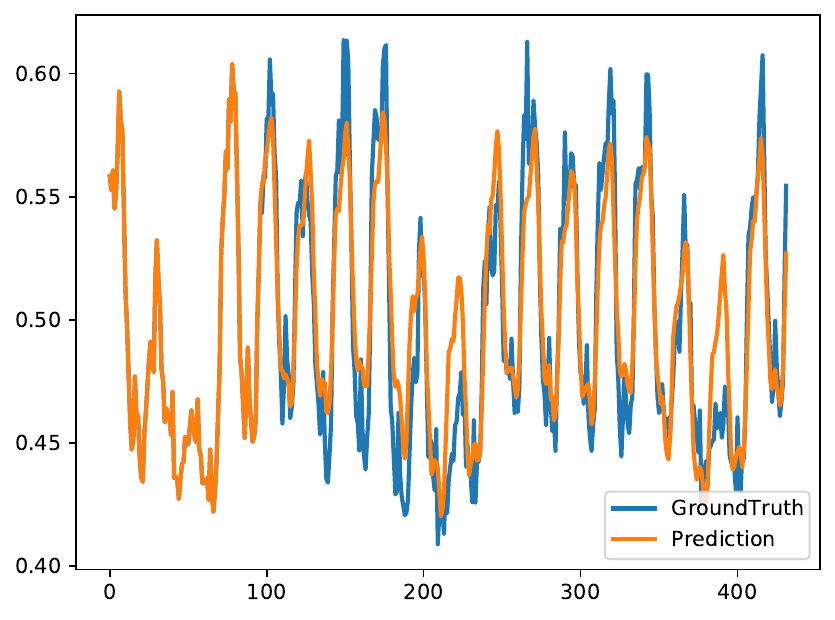}
        \label{96-336}
    }
    \quad
    \subfigure[I/O=96/720]
    {
        \centering
        \includegraphics[width=0.4\linewidth]{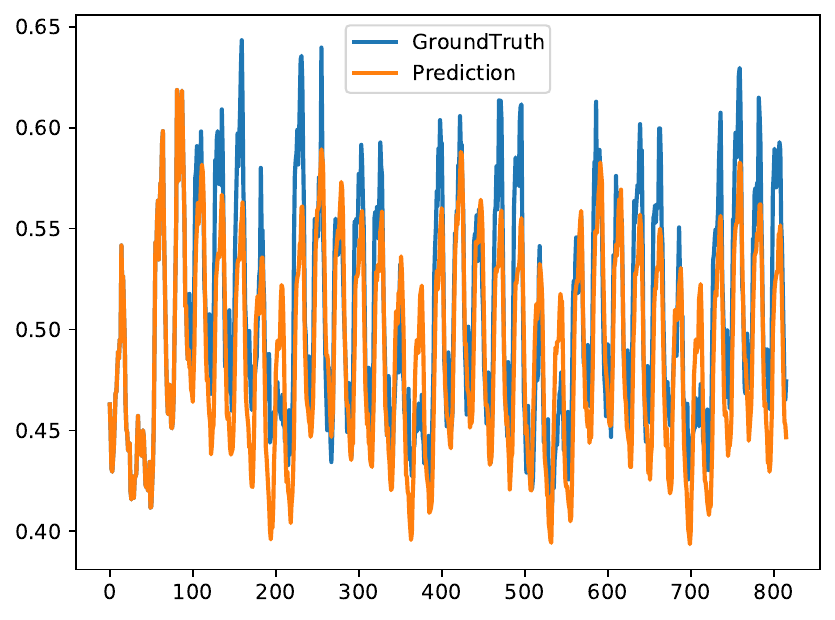}
        \label{96-720}
    }
    \caption{Visualizations of predictions (forecast vs. actual) on the Electricity dataset. 'I/O' denotes lookback window sizes/prediction lengths.}
    \label{fig:value_electricity}
\end{figure*}

\end{document}